\newtheorem{proposition}{Proposition}
\theoremstyle{definition}
\DeclareMathOperator*{\EX}{\mathbb{E}}%
\let\mathexp=\exp %
\let\gbexp=\exp %
\DeclareRobustCommand{\exp}{\ifmmode\mathexp\else\expandafter\gbexp\fi}
\Crefname{xnumi}{Example}{Examples}
\Crefname{xnumii}{Example}{Examples}
\DeclarePairedDelimiter\abs{\lvert}{\rvert}%
\DeclarePairedDelimiter\norm{\lVert}{\rVert}%
\let\oldabs\abs
\def\abs{\@ifstar{\oldabs}{\oldabs*}}
\let\oldnorm\norm
\def\norm{\@ifstar{\oldnorm}{\oldnorm*}}
\newcommand{\cmark}{\textcolor[HTML]{009900}{\ding{51}}}%
\newcommand{\xmark}{\textcolor[HTML]{900000}{\ding{55}}}%
\newcommand\marked[1]{\emph{\color{gray}*#1}}
\newcommand\entailed[1]{{\textcolor[HTML]{009900}{$\Rightarrow$#1}}}
\definecolor{olmoBlue}{HTML}{265ed4}
\newcommand{\balpha}{{\color{olmoBlue} \boldsymbol{\alpha}}}
\newcommand{\bbeta}{{\color{olmoBlue} \boldsymbol{\beta}}}
\title{Can You Learn Semantics Through Next-Word Prediction?\\The Case of Entailment}
\author{
    William Merrill\thanks{~~Equal contribution. We release our code and data at\\
    \href{https://github.com/ZhaofengWu/entailment-from-lm}{\nolinkurl{github.com/ZhaofengWu/entailment-from-lm}}.
    }~$\hspace{0.5mm}^{\balpha}$ \quad
    Zhaofeng Wu$^{*\bbeta}$ \quad
    Norihito Naka$^\balpha$ \quad
    Yoon Kim$^\bbeta$ \quad
    Tal Linzen$^\balpha$ \\
    $^\balpha$New York University \quad
    $^\bbeta$Massachusetts Institute of Technology
}
\begin{document}

\maketitle

\begin{abstract}
    Do LMs infer the semantics of text from co-occurrence patterns in their training data? \citet{merrill-etal-2022-entailment} argue that, in theory, sentence co-occurrence probabilities predicted by an optimal LM should reflect the entailment relationship of the constituent sentences, but it is unclear whether probabilities predicted by neural LMs encode entailment in this way because of strong assumptions made by \citeauthor{merrill-etal-2022-entailment} (namely, that humans always avoid redundancy). In this work, we investigate whether their theory can be used to decode entailment relations from neural LMs. We find that a test similar to theirs can decode entailment relations between natural sentences, well above random chance, though not perfectly, across many datasets and LMs. This suggests LMs implicitly model aspects of semantics to predict semantic effects on sentence co-occurrence patterns. However, we find the test that predicts entailment in practice works in the opposite direction to the theoretical test. We thus revisit the assumptions underlying the original test, finding its derivation did not adequately account for redundancy in human-written text. We argue that better accounting for redundancy related to \emph{explanations} might derive the observed flipped test and, more generally, improve computational models of speakers in linguistics.
\end{abstract}

\section{Introduction}

Inspired by the empirical capabilities of language models (LMs) trained on next-word prediction, recent work has examined if and how linguistic meaning might be inferred from raw text \citep[][inter alia]{bender-koller-2020-climbing,merrill2021provable,pavlick2022semantic,wu-etal-2023-transparency}.
A text corpus is the result of humans using text to communicate information, and doing this efficiently requires following pragmatic principles like avoiding contradictory or redundant sentences.
Therefore, training to predict whether sentences can co-occur (which can reduce to next-token prediction) might lead LMs to represent semantic relationships between sentences \citep{harris1954distributional,potts2020possible,michael2020dissect}.

But does sentence co-occurrence provide enough signal for LMs to learn to represent complex semantic phenomena like entailment?
\citet{merrill-etal-2022-entailment} derive a simple equation by which the entailment relation between two sentences can be detected using their co-occurrence probability in a corpus generated by speakers who avoid redundancy. Intuitively, non-redundant speakers will rarely utter entailed sentences, so low co-occurrence probability of two sentences is predictive of their entailment relationship.
This means that, in principle, learning to model sentence co-occurrence perfectly requires an LM to implicitly model entailment,
and entailment classifications can be extracted from the co-occurrence probabilities of such an LM.

However, \citeauthor{merrill-etal-2022-entailment}'s theoretical result has two caveats.
First, it assumes an ``ideal'' LM that perfectly models the likelihood of texts in a language. 
Second, it makes the strong (but theoretically motivated; \citealp{grice1975logic}) assumption that speakers always avoid redundancy.
It is thus unclear whether real LMs infer a model of entailment from sentence co-occurrence probabilities in their training data, both because LMs may misestimate probabilities and because the required assumptions about human speakers may be too simplified.

In this work, we empirically evaluate the distributional entailment test from \citet{merrill-etal-2022-entailment}: can we use it to classify entailment from LM probability estimates?
Overall, we find across a wide range of entailment benchmarks and LMs that a variant of the entailment test consistently detects entailment well above random chance. 
This suggests that LM probability judgments are sensitive to the relationships between sentence meanings that are reflected in sentence co-occurrence patterns, at least to some extent. This further suggests that next-word prediction is a strong enough objective for LMs to acquire at least a partial model of entailment relationships between sentences.

However, this result comes with a surprise. Across many entailment benchmarks, we find that the direction of the test is \emph{flipped} compared to \citeauthor{merrill-etal-2022-entailment}'s theoretical test: higher co-occurrence probabilities correlate with entailment when the \emph{opposite} is expected!
We take this as evidence against a theory of human speakers based purely on minimizing redundancy.
Analyzing natural corpora, we find humans are often more redundant than \citeauthor{merrill-etal-2022-entailment}'s non-redundant speakers, which could explain the flipped test.
We present a preliminary account of how better accounting for explanations (one observed type of redundancy) might predict the flipped test.
Overall, our results motivate future work in computational pragmatics accounting for redundancy and are a case study for how the data aggregated about many speakers in LMs can be used to test and develop pragmatic theories.

\section{Distributional Semantics and the Entailment Test} \label{sec:bakground}
There is an old debate in linguistics and NLP about whether distributional semantics---the idea that text co-occurrence patterns can contain semantic information---captures semantics in any true sense \citep{brunila-laviolette-2022-company}. This debate goes back at least to \citet{harris1954distributional}, who argues that sentence co-occurrences patterns in a corpus could be used as data to build a linguistic theory of semantics, but it has been revisited in recent years in terms of LMs. In particular, \citet{bender-koller-2020-climbing}---in disagreement with \citet{harris1954distributional}---took a strong stance against the claim that LMs ``understand'' language because understanding requires modeling communicative intent or at least conventionalized semantic denotations, both of which do not appear explicitly in the training data for LMs.

While it is certainly true that LMs are only trained on surface forms,
counterarguments to \citet{bender-koller-2020-climbing} have been given for how LMs might be able to reconstruct semantic information from their training data.
One line of counterarguments \citep{potts2020possible,michael2020dissect,merrill-etal-2022-entailment} echoes \citet{harris1954distributional}, positing that sentence co-occurrence probabilities contain information about semantics because speakers aim to be truthful and informative and are thus unlikely to produce contradictory or redundant pairs of sentences. Properly learning which sentences can co-occur (part of LM training) thus amounts to acquiring a \emph{semantic} representation of which sentences are contradictory or redundant with one another.\footnote{Alternative signals also exist that LMs could use to bootstrap a semantic representation, such as assertions~\citep{merrill2021provable}.}
\citet[][CoNLL slides]{merrill-etal-2022-entailment} motivate this claim
with the following example:
\begin{exe}
    \ex I have two cats. \label{ex:cats}
    \begin{xlist}
        \ex \marked{I don't have a cat.} \label{ex:dont-have}
        \ex \marked{I have a cat.} \label{ex:have}
        \ex One is orange. \label{ex:orange}
    \end{xlist}
\end{exe}
\Cref{ex:dont-have} is unlikely to be uttered because it contains a contradiction. More subtly, \Cref{ex:have} is unlikely because its second sentence is uninformative given the first, even though they are consistent. In contrast, \Cref{ex:orange} is acceptable because it is consistent \emph{and} adds new information.
Thus, \Cref{ex:cats} suggests sentence co-occurrence is governed by semantic constraints against inconsistency and redundancy. If strong LMs correctly model such co-occurrences, they might need an implicit model of sentence semantics to determine these properties.

\subsection{The Entailment Test}

One way to define semantic competency is the ability to resolve entailment relations between pairs of sentences.
This simple idea has a long history both in both the philosophy of language \citep{VanBenthem1986,brandom2000articulating} and NLP evaluation \citep{dagan2010recognizing}.
Drawing on the semantic nature of sentence co-occurrence and its connection to redundancy,
\citet{merrill-etal-2022-entailment} derive a test to check whether sentence $x$ entails sentence $y$ using their co-occurrence probability in a corpus produced by so-called \emph{Gricean speakers}.
If we accept the idea that the ability to evaluate entailment captures semantics in full, this test establishes semantics, can, in principle, be inferred from next-word prediction. 

\paragraph{Gricean Speakers.}
Gricean speakers are a computational model of human speakers implementing principles for effective communication (the Gricean maxims; \citealp{grice1975logic}). The maxims say a speaker should convey as much relevant information as possible without saying too much, among other desiderata.
Following standard computational choices in rational theories of pragmatics \citep{goodman2016pragmatic},
\citet{merrill-etal-2022-entailment} operationalize the maxims by modeling the probability of a text $z$ produced by a Gricean speaker as a function of $z$'s information content and cost:
\begin{compactitem}
    \item \textbf{Information content:}
    Sentences that convey more information to a listener are more likely to appear in a corpus than those that convey less.
    This penalizes untruthful, uninformative, and redundant sentences.
    Let $i_\ell(y \mid x, w)$ be the information $y$ conveys to the listener given beliefs $w$ and context $x$, which speakers aim to \emph{maximize}.
    \item \textbf{Cost:} Long or complex sentences should be less likely so that speakers do not produce informative, but verbose, text. The model assumes a function $c(y)$ that gives the cost of sentence $y$, which speakers aim to \emph{minimize}.
\end{compactitem}
Under \citet{merrill-etal-2022-entailment}'s model, a Gricean speaker utters $y$ (having said $x$) with probability
\begin{equation*}
    p(y \mid x, w) \propto \exp(i_\ell(y \mid x, w) - c(y)) .
\end{equation*}
A sequence of sentences $z_1 \cdots z_n$ occurs in a corpus generated by Gricean speakers with probability
\begin{equation*}
    p(z) = \EX_w \left[ \prod_{i=1}^n p(z_i \mid z_{<i}, w) \right] .
\end{equation*}
Let $\$$ denote a special ``end-of-text'' sentence.

\paragraph{Entailment Test.} Assuming a corpus is sampled from a collection of Gricean speakers with different beliefs, \citet{merrill-etal-2022-entailment} derive the following measure $\hat E_p(x, y)$ for detecting entailment purely using log probabilities of sentence co-occurrences:
\begin{align} \label{eq:orig-test}
\begin{split}
\hat E_p(x, y)
    &= \log p(xy) - \log p(x\$) \\
    &- \log p(yy) + \log p(y\$).
\end{split}
\end{align}

\noindent A ${\sim}0$ score means entailment.
The first two terms $\approx \log p(y \mid x)$ and the last two $\approx {-}\log p(y \mid y)$. This gives some intuition for the test: 0 means $xy$ is as redundant as $yy$, i.e., $x$ entails $y$ (see \Cref{sec:test-details}).

\section{Evaluating the Entailment Test}

\citet{merrill-etal-2022-entailment} showed their test could detect entailment from n-gram LMs trained on synthetic data generated by Gricean speakers. Although Gricean speakers capture some principles of how humans speak, they are likely simplistic compared to real language use.
Additionally, real LMs may misestimate the co-occurrence probabilities used by the test.
For both of these reasons, it is unclear whether the entailment test should correctly detect entailment on natural sentences given LM-estimated probabilities.
We thus evaluate the entailment test with probabilities computed by real LMs on natural-language entailment benchmarks.

\subsection{Entailment Datasets}

We first evaluate the entailment test on existing \emph{broad-coverage} entailment datasets built by crowd workers: RTE~\citep{dagan2010recognizing}, MNLI~\citep{williams2018mnli}, WaNLI~\citep{liu-etal-2022-wanli}, and ANLI~\citep{nie-etal-2020-adversarial}.\footnote{For ANLI, we use the data collected in the third round.} Unless otherwise mentioned, we always use the training set.
We collapse three-way label distinctions (entailment, neutral, contradiction) to entailment or non-entailment.
We also evaluate on \emph{targeted synthetic} entailment datasets designed to test specific kinds of entailment
à la GLUE diagnostics~\citep{wang-etal-2018-glue}: specifically, entailment related to
the logical connectives \emph{and}/\emph{or}, the quantifiers \emph{all}/\emph{some}, numbers, passivization, and datives (details in \Cref{sec:targeted-eval}).
We reported dataset statistics in \Cref{sec:dataset-stats}.

\subsection{Models}

We evaluate the entailment test with probabilities computed by a diverse suite of LMs: GPT-2 \citep{Radford2019LanguageMA}, OPT \citep{zhang2022opt}, \mbox{Llama-1} \citep{touvron2023llama}, Vicuna \citep{vicuna2023}, Llama-2, and Llama-2-Chat~\citep{touvron2023llama2}. The LMs vary in size, pretraining data, and whether and how they undergo an ``alignment'' process (i.e., instruction-tuning or RLHF). For each LM family, we use both the smallest and the largest publicly available LM
(see \Cref{sec:lms-we-used} for a list).

\subsection{Evaluation Metric: Flipped ROC-AUC}

The entailment test does not directly classify entailment but gives a score where ${\sim}0$ suggests entailment and higher values suggest non-entailment.
This can be converted to a classifier by choosing a decision boundary for entailment, but the choice of a threshold is arbitrary.
To evaluate the test, we thus use the standard ROC-AUC metric, which can be understood to evaluate the score holistically across different choices of the threshold. There is an inherent tradeoff between precision and recall with the choice of the threshold, and ROC-AUC provides a consistent way to evaluate without arbitrarily fixing the threshold. Independent of the class imbalance, ROC-AUC ranges from 0 to 100 where 50 is random chance.
In many cases, we found that the flipped entailment score (meaning \Cref{eq:orig-test} with the sign of each term flipped) detected entailment better than the original score (\S\ref{sec:test-is-flipped}).
We thus report the ROC-AUC score of the flipped score, which we call \emph{flipped ROC-AUC}.

\begin{figure}[t!]
    \centering
    \includegraphics[width=0.42\textwidth]{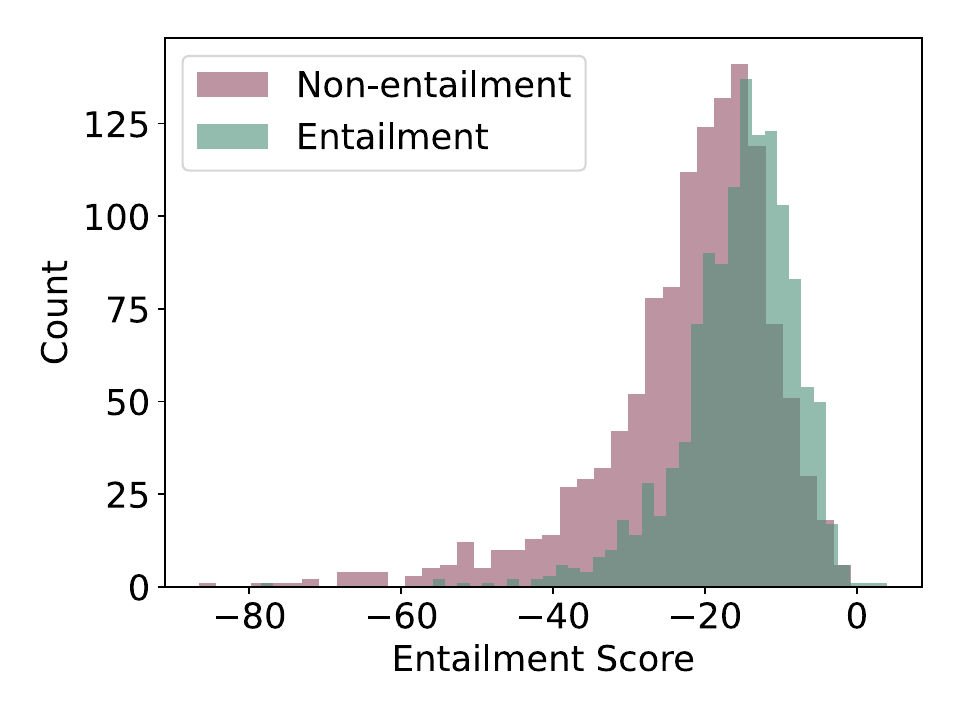}
    \vspace{-2mm}
    \caption{Entailment score $\hat E_p(x, y)$ distribution computed with Llama2-70b probabilities on RTE. \textbf{The score discriminates the two classes, though imperfectly.}}
    \label{fig:density}
    \vspace{-4mm}
\end{figure}

\section{Entailment Test Results} \label{sec:base-test-results}

Overall, we find the test predicts entailment on the broad-coverage datasets, but only when the test score is \emph{flipped} compared to the theoretical test (i.e., a larger score means entailment).
However, the pattern is more complicated for the targeted data, where some constructions follow the flipped trend but others follow the original, unflipped test.

\subsection{Flipped Test on Broad-Coverage Data} \label{sec:test-is-flipped}

\Cref{fig:density} shows the entailment score $\hat E_p(x, y)$ for the RTE training data using Llama2-70b probabilities. The score distinguishes the two classes, but not perfectly. However, the theory predicts smaller $\hat E_p(x, y)$ for entailment vs.~non-entailment, which is \emph{flipped} in \Cref{fig:density} (which we try to account for in \Cref{sec:towards-accounting-for-redundancy}).
We find this holds consistently across the broad-coverage datasets: the flipped entailment test detects entailment above random chance and a length baseline that is designed to control for spurious correlations~\citep{gururangan-etal-2018-annotation}\footnote{Computed by using the premise length, the hypothesis length, or the inverse of each, as the score, whichever of the four yields the best flipped AUC-ROC.} (\Cref{fig:aucroc-histogram}).
We also hypothesize the entailment test should be more predictive for better LMs.
Using bits per byte~(BPB; \citealp{gao2020pile})\footnote{To be comparable across tokenizaion schemes.} on the C4 validation set~\citep{t5} as the proxy for model quality, we plot their correlation in \Cref{fig:aucroc-vs-perplexity}.
Across broad-coverage datasets, better (lower) BPB is associated with higher flipped ROC-AUC.
This suggests LMs that more accurately predict the next token also better model sentence co-occurrence patterns reflecting entailment.

\begin{figure}[t!]
    \centering
    \includegraphics[width=0.44\textwidth]{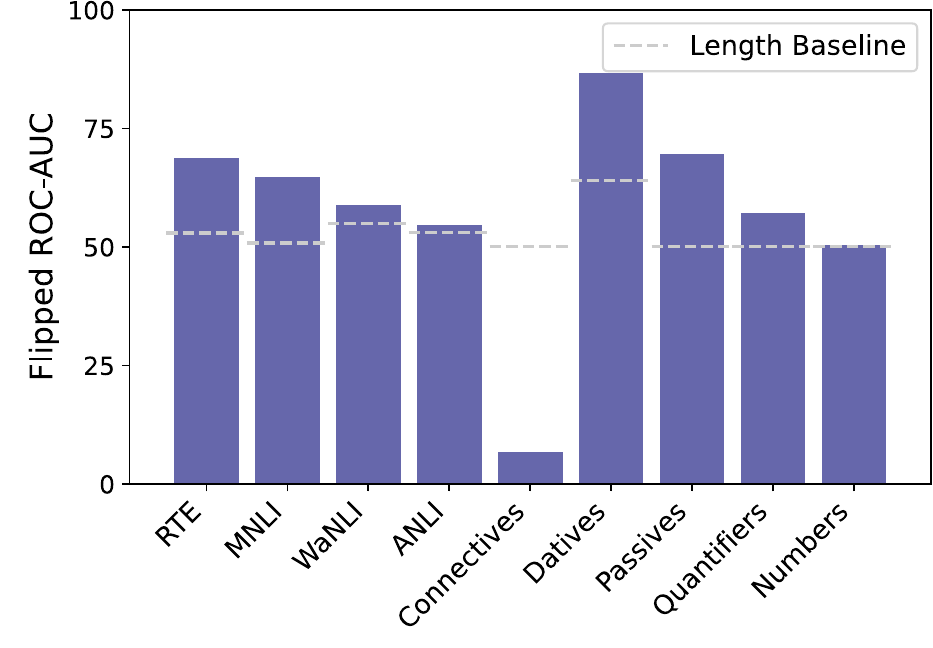}
    \vspace{-2mm}
    \caption{Flipped AUC-ROC scores for the entailment test across datasets using Llama2-70b probabilities. \textbf{The flipped test generally performs above random} (=50) and the length baseline, while the original test works better for connectives ($<$50 Flipped ROC-AUC).}
    \vspace{-2mm}
    \label{fig:aucroc-histogram}
\end{figure}

We also evaluate how test performance emerges during training using Pythia-12b checkpoints.
\Cref{fig:pythia} shows that ROC-AUC consistently increases as training progresses.
Around 1b tokens, flipped ROC-AUC scores on RTE, MNLI, and WaNLI sharply increase together, suggesting the model undergoes a phase transition where general features useful for predicting entailment may be emerging~\citep{chen2024sudden}.

\begin{figure*}[t!]
    \centering
    \includegraphics[width=\textwidth]{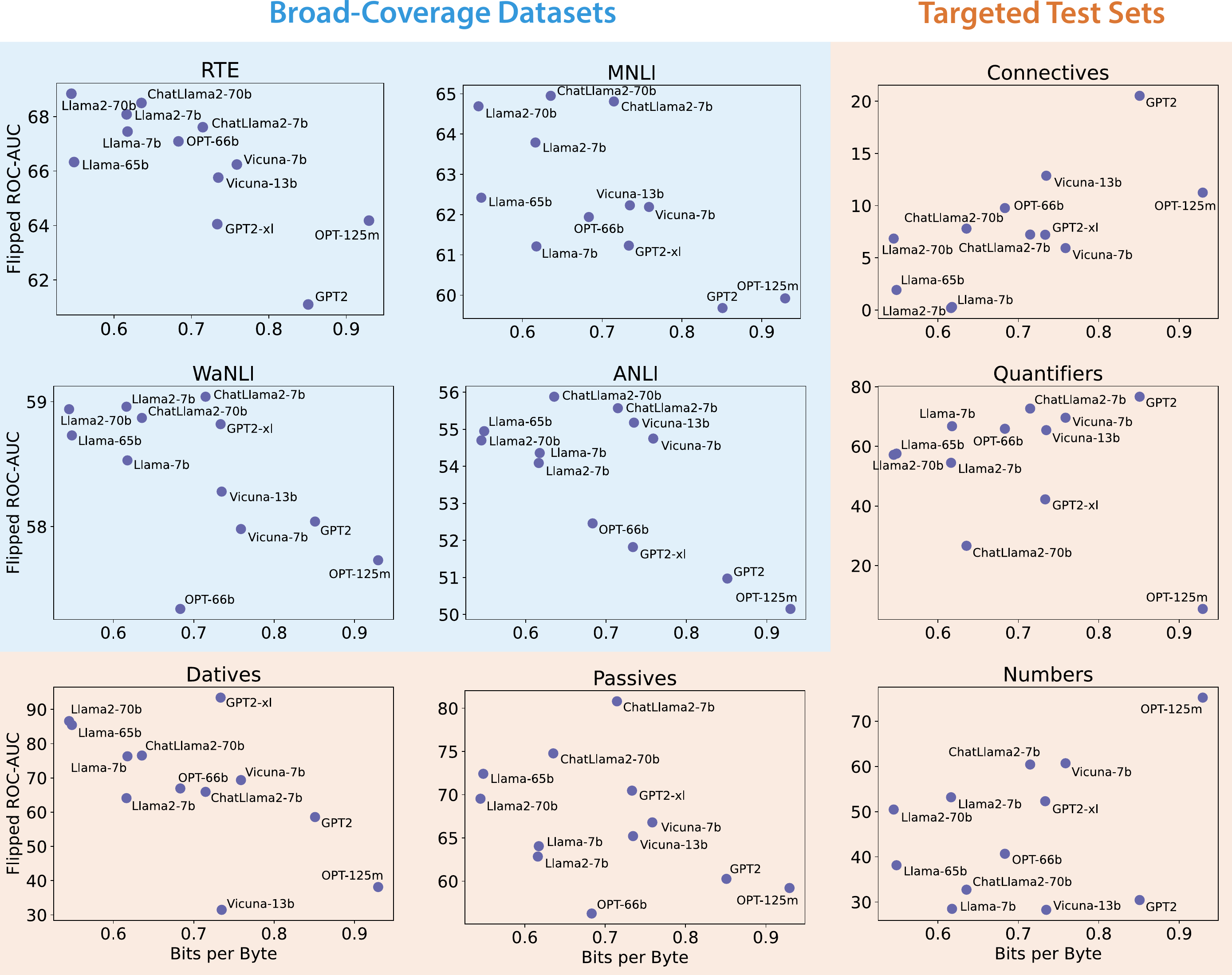}
    \caption{C4 validation bits per byte vs.~flipped AUC-ROC score for all models on broad-coverage
    and targeted datasets. Note that the scale of the $y$-axis differs for each subplot. See \Cref{fig:aucroc-histogram} for a scale-controlled version of Llama2-70b results.
    \textbf{For broad-coverage datasets,  model quality (represented by bits per byte, lower is better) clearly correlates with flipped test performance,} though this is more complicated for the targeted test sets.}
    \label{fig:aucroc-vs-perplexity}
\end{figure*}

\subsection{Varied Pattern for Targeted Phenomena}

\Cref{fig:aucroc-histogram} shows the flipped test works better for datives, passives, and quantifiers. For connectives, the unflipped test better predicts entailment. This suggests that, while the flipped test outperforms the original test in aggregate, the original theory might apply only for \emph{some} constructions.
\Cref{fig:aucroc-vs-perplexity} shows the association between LM BPB and flipped ROC-AUC for the targeted cases.
Datives, passives, and quantifiers show a similar trend to the broad-coverage data where lower BPB associates with higher flipped ROC-AUC, but
connectives and numbers mostly follow the original test.

\subsection{Learning a Distributional Entailment Test} \label{sec:learning-a-distributional-entailment-test}

We have seen that the distributional entailment test of \citet{merrill-etal-2022-entailment} can detect entailment, but only when the sign of each term is flipped.
We now evaluate this flipped test by comparing it to an oracle test that optimally predicts entailment. Their discrepancies would inform us about realistic LMs and data distributions.
We train a small regression model that weights co-occurrence probabilities to predict entailment and inspect the learned weights.

\paragraph{Setup.}
The original entailment test can be viewed as a linear model with features $\phi$ and parameters $\theta$:
\begin{align*}
    \phi &= \langle \underbrace{\log p(xy), \log p(x\$)}_{\textit{Left-hand side (LHS)}}, \underbrace{\log p(yy), \log p(y\$)}_{\textit{Right-hand side (RHS)}} \rangle \\    
    \theta &= \langle 1, -1, -1, 1 \rangle .
\end{align*}

Instead of applying the test with parameters $\theta$ (original test) or $-\theta$ (flipped test), we now \emph{learn} parameters $\hat \theta$ via logistic regression on labeled entailment pairs.
This learned test is \emph{not} a standard supervised text classifier: it only gets sentence co-occurrence log-probabilities as input, not text itself.

\begin{figure}[!t]
    \centering
    \includegraphics[width=\columnwidth]{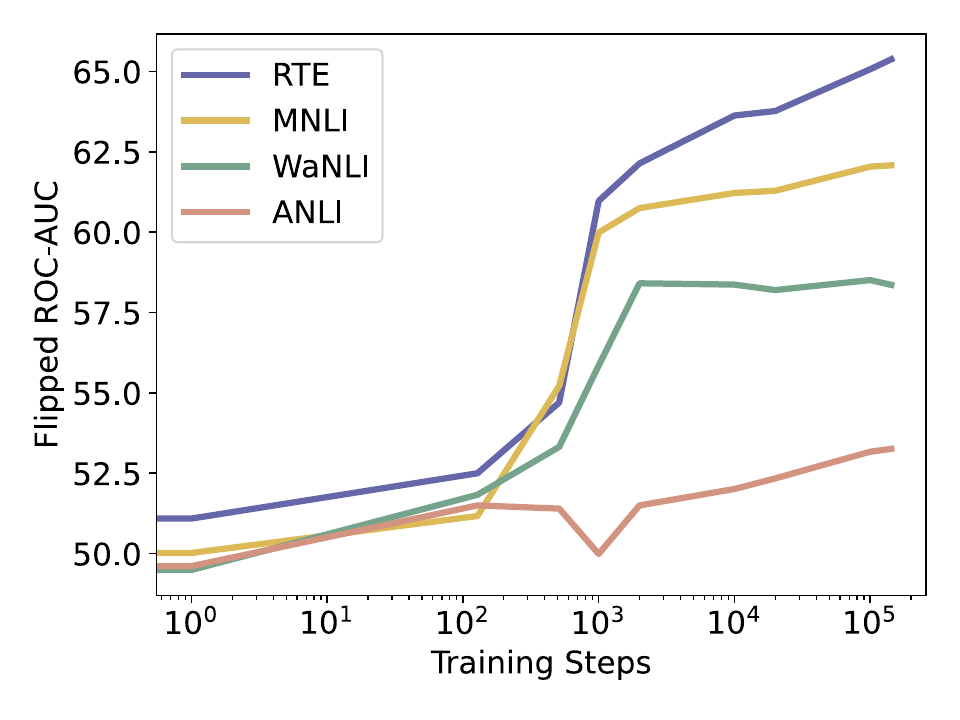}
    \caption{Flipped ROC-AUC of entailment score across Pythia-12b checkpoints. Each step is around 2M tokens.}
    \label{fig:pythia}
\end{figure}

\paragraph{Results.}
\Cref{fig:learned-coefficients-paired} shows the results for the broad-coverage datasets (other datasets in \S\ref{sec:learned-entailment-test-for-more-datasets}).
For the LHS, the negative $xy$ weight matches the positive $x\$$ weight in magnitude, as for the flipped test. For the RHS, the trend is less consistent, but
$yy$ and $y\$$ generally get smaller weights than the LHS terms.
Nevertheless, in aggregate, $yy$ gets a positive weight of the same magnitude as the negative $y\$$ weight (\Cref{fig:learned-coefficients-yy-vs-y}), as for the flipped test.

We interpret the similarity between the flipped and learned tests as evidence for the directional correctness of the flipped test.
The main difference between the learned and flipped tests is that the RHS has smaller weights than the LHS for the learned test.
This may be due to the transformer's learning biases and not the underlying data:
Transformer LMs are prone to in-context copying \citep{olsson2022context} and thus might overestimate $\log p(yy)$. Reduced RHS weights may correct for this.

\begin{figure}[t!]
    \centering
    \includegraphics[width=0.48\textwidth]{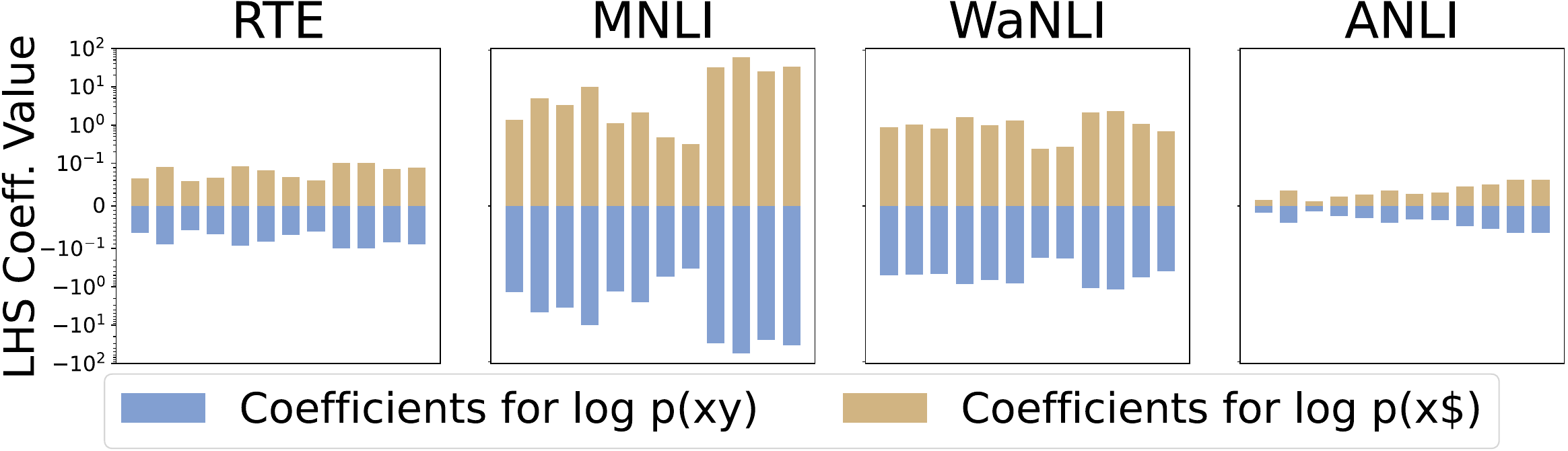}
    \includegraphics[width=0.48\textwidth]{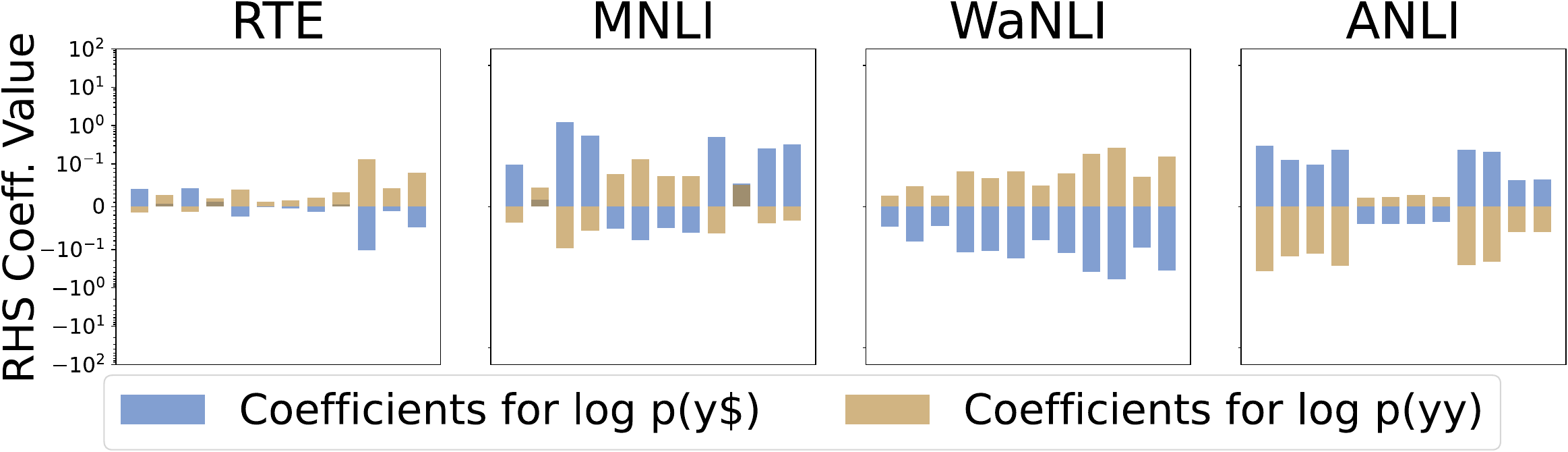}
    \caption{Learned logistic regression coefficients for the log-prob features for the broad-coverage datasets. Each bar represents one LM. For ease of visualization, $y$-axis is in log scale, except in $[-0.1, 0.1]$ where it is linear.
    }
    \label{fig:learned-coefficients-paired}
\end{figure}

\subsection{Results Excluding Contradiction} \label{sec:no-contradiction}

Our results so far have compared the entailment test performance on entailment vs. non-entailment pairs.
However, the most surprising aspect of results (the flipped pattern) involves a comparison of entailment and neutral pairs, as it is expected that contradiction pairs should have a lower score than entailment pairs. 
Thus, in \Cref{sec:no-contradiction}, we repeat all analyses (Figures \ref{fig:aucroc-histogram} to \ref{fig:learned-coefficients-yy-vs-y}) contrasting entailment and neutral pairs, with contradiction excluded. Overall, the results are qualitatively similar, but the correlations between perplexity and test performance is less strong in some cases, and the logistic regression coefficients found on MNLI are less interpretable.

\section{Corpus Study: Characterizing Naturalistic Linguistic Redundancy} \label{sec:corpus-study}

A surprising finding from the previous section is that the entailment test is robustly flipped: entailed continuations tend to be \emph{more likely} than non-entailed ones.
This suggests the Gricean speaker assumed to derive the test may be too simplistic to account for humans.
In particular, we hypothesize the disconnect may be because human speakers are \emph{explicitly redundant} in certain contexts unlike Gricean speakers, who always avoid redundancy.
We thus search for natural instances of \emph{contextually entailed text} in corpora to better understand why real human speakers produce redundant sentences.

\paragraph{Data.}
To find contextually entailed sentences in different types of discourse, we consider a variety of web domains: Book3 \citep{gao2020pile}, Wikipedia (en) \citep{gao2020pile}, Multi-News \citep{fabbri2019multinews} and Reuters-21578 \citep{hayes90construe}, Yahoo! Answers Topics \citep{zhang2016characterlevel}, and Yelp Reviews \citep{zhang2016characterlevel}.

\begin{figure}[t!]
    \centering
    \includegraphics[width=0.48\textwidth]{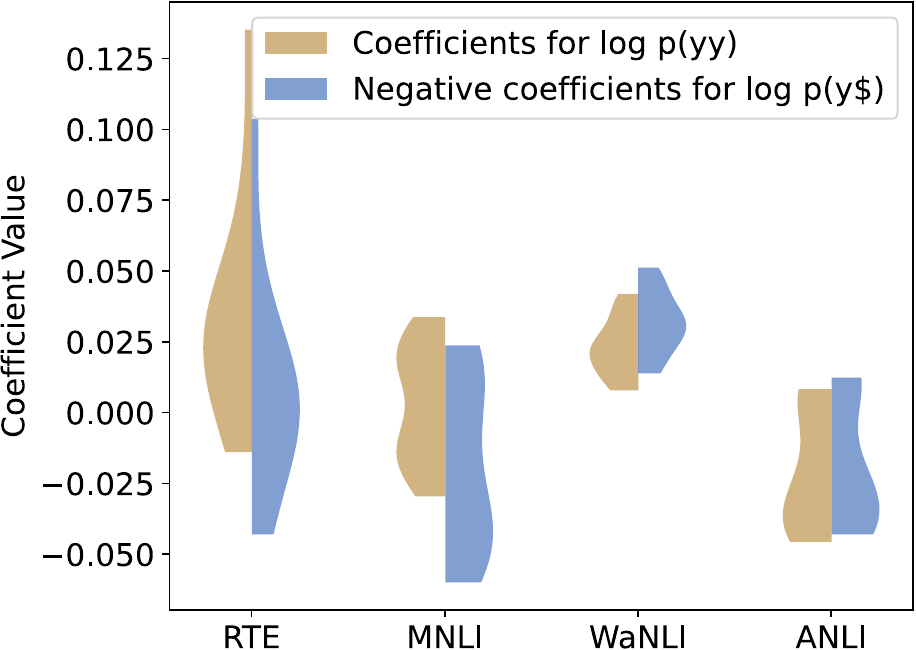}
    \caption{
    The RHS coefficients, for $\log p(y\$)$ and $\log p(yy)$, marginalized across all LMs.
    }
    \label{fig:learned-coefficients-yy-vs-y}
\end{figure}

\paragraph{Finding Contextually Entailed Text.}
For each document in each corpus, we construct premise and hypothesis pairs by choosing six contiguous sentences, with the first five as the premise and sixth as the hypothesis.
We use entailment classifiers finetuned from T5 \citep{honovich-etal-2022-true-evaluating} and RoBERTa \citep{liu2019roberta} to detect entailment pairs
and take the intersection of examples considered entailment by both.
We then manually filter to remove incorrect entailment pairs
(details in \Cref{sec:entailment-manual-inspection}).

\paragraph{Results.}
As \Cref{tab:dataset-entailment} shows, the frequency of entailed sentences is on the order of at least $10^{-3}$.
Even this lower bound is several orders of magnitude higher than expected for a Gricean speaker. Quite conservatively, imagine that for each entailed continuation there is at least one alternative of the same length that conveys 10 nats of information, which is quite reasonable given Shannon's lower bound estimate of 0.4 nats/character\footnote{Technically, the Gricean speaker uses semantic information, whereas Shannon's estimate captures \emph{all} information. However, we imagine most information in text is semantic, so these are on the same order of magnitude.} \citep{shannon1951prediction} and that typical sentences are at least 30 characters.
Then the likelihood of producing an entailed sentence should be at most $1/\exp(10) \approx 10^{-5}$.
This suggests the data cannot be accounted for by assuming speakers always avoid redundancy.

To better understand what is lost when assuming speakers always avoid redundancy, we inspect examples of contextually entailed text from these corpora.
We find there are many reasons speakers produce entailed text.
This includes both \emph{repetition} of previous statements (44.44\%\footnote{Percentages determined manually; see \Cref{sec:entailment-categories} for details.}) and high-level \emph{summaries} or \emph{conclusions}  (35.56\%).
One observed use of repetition is to emphasize an important point:

\newcommand*{\specialcell}[2][b]{%
  \begin{tabular}[#1]{@{}c@{}}#2\end{tabular}%
}
\newcommand*{\specialcellbold}[2][b]{%
  \bfseries
  \begin{tabular}[#1]{@{}c@{}}#2\end{tabular}%
}
\newcommand*{\leftspecialcell}[2][b]{%
  \begin{tabular}[#1]{@{}l@{}}#2\end{tabular}%
}
\begin{table}[]
    \centering
    \begin{tabular}{l|cccc}
        \textbf{Data Sources} & \textbf{T5} & \textbf{RoB} & \textbf{$\cap$} & \textbf{$\cap^{+}$} \\
        \hline
        Book3                & 0.40 & 1.31 & 0.33 & 0.27 \\
        Wikipedia (en)       & 0.47 & 1.69 & 0.30 & 0.24 \\
        Yelp Review          & 1.53 & 1.78 & 0.56 & 0.50 \\
        Multi-News           & 2.11 & 2.82 & 2.11 & 1.88 \\
        Reuters-21578        & 0.64 & 1.53 & 0.51 & 0.38 \\
        Yahoo! Answers        & 1.63 & 8.16 & 0.82 & 0.82 \\ 
    \end{tabular}
    \caption{Percentage of sentences entailed by their immediate context. $\cap$ is the intersection of sentences classified as entailment by both T5 and RoBERTa (RoB). $\cap^{+}$ is the percentage after manual filtering.}
    \label{tab:dataset-entailment}
\end{table}

\begin{exe}
    \ex \textbf{Yelp Review:}
    When he returned with it, he just placed it in front of me on the wet bar- no napkin/coaster, the beer was flat, and contained a FREAKING lemon. \entailed{Not an orange- a lemon.} \label{ex:emphasis}
\end{exe}

\noindent Beyond repetition, we also found examples where a weaker claim follows more specific premises:
\begin{exe}
    \ex \textbf{Yelp Review:}
    Frankly, I'm no oyster aficionado, but after comparing with other restaurant, it was pretty weak. In comparison to other oyster bars in the area, they were much to liquid-y. That is, they just didn't have enough substance on the whole and also, the taste wasn't really like seawater, it was more salt water than anything. \entailed{Fairly disappointed in the oysters.} \label{ex:oysters}
\end{exe}
In \Cref{ex:oysters}, the final sentences does not restate all the information from any previous sentence but rather makes a weaker claim that summarizes the review. In other cases, we find that the conclusion of logical arguments can behave similarly:
\begin{exe}
    \ex \textbf{Wikipedia:} All of the known sphenacodonts are carnivores except for certain therapsids. Glaucosaurus is plainly not a therapsid
    \ldots
    And it is just as plainly not a carnivore
    \ldots
    \entailed{So, it is very likely to be an edaphosaur.} \label{ex:explanation}
\end{exe}
With the world knowledge that a glaucosaurus must either be an edaphosaur or a sphenacodont,
the final sentence follows logically from the context. Thus, it seems the role of this entailed sentence is to make explicit the conclusion of a logical argument.

In summary, our corpus study reveals that more entailed text is uttered by humans than expected if humans were always avoiding redundancy, as Gricean speakers do.
There are many types of entailed text, including both repetition and instances where the entailed text is a summary or conclusion.
Next, we will consider how a Gricean speaker might be extended to account for this behavior.

\section{Towards Accounting for Redundancy} \label{sec:towards-accounting-for-redundancy}

We have found that, in practice, the flipped entailment test better detects entailment than the original one and that this trend is also supported by an oracle logistic regression analysis (\Cref{sec:base-test-results}).
Our corpus study (\Cref{sec:corpus-study}) pointed to a possible explanation: the original test relied on the fact that Gricean speakers always avoid redundancy, but real humans produce redundant text in certain contexts.
Quantitatively, the rate of contextually entailed sentences in natural corpora was higher
than we would expect if the corpus authors were Gricean speakers.
Qualitatively, specific examples suggested humans are redundant both to repeat important information and for the sake of explanation, i.e., they state entailed summaries or conclusions after a more detailed premise.
\emph{Prima facie}, such redundancy could lead to a flipped entailment test if entailed continuations, which are fully redundant, become more likely than other continuations. However, it is crucial to have a more concrete theory of \emph{why} speakers are redundant to evaluate this and ideally explain why the test direction varies across constructions.
We thus consider some possible angles to extend Gricean speakers to account for redundant speech acts and whether these extensions predict the flipped test.

\subsection{Redundancy via Noise Tolerance} \label{sec:redundancy-via-noise-tolerance}

Our corpus study showed that one type of redundancy in natural text unaccounted for by Gricean speakers is simple repetition.
For example, the speaker in \Cref{ex:emphasis} repeats the claim that the orange in their beer was not a lemon.
Gricean speakers are unlikely to generate such repetition, but they can be extended to do so by assuming there is noise in the communication channel, i.e., listeners may fail to interpret each sentence with some probability \citep{degen2019redundancy}.
In this setting, a rational speaker is incentivized to hedge the risk their listener might not understand important information by repeating it twice. We call such a speaker a \emph{noise-tolerant} speaker, which we formalize in \Cref{sec:noisy-speaker}.

Noise-tolerant speakers can better account for repetition than Gricean speakers, but, if we assume corpora are generated by noise-tolerant speakers, would it explain the flipped direction of the entailment test? The short answer seems to be no. In \S\ref{sec:noisy-speaker}, we derive an extension of the entailment test that ``cancels out'' noise tolerance by simply repeating the initial sentence in each term $n$ times:
\begin{align*}
    \hat E_p^n(x, y)
    &\triangleq \log p(x^ny) - \log p(x^n\$) \\
    &- \log p(y^{n+1}) + \log p(y^n\$) .
\end{align*}
As $n$ increases, this test approximates the original test for a Gricean speaker.
Thus, if the source of the flipped test was redundancy introduced by a speaker's goal of being noise-tolerant, this test should work unflipped. Instead, we find that the \emph{flipped} noise-tolerant test still detects entailment---in fact, better than the original flipped test. Post hoc analysis suggests the better performance may be due to the computational benefit of the additional tokens in the noise-tolerant test prompts.
In summary,
accounting for noise tolerance does not seem to explain why the test was flipped.

\subsection{Redundancy via Explanations} \label{sec:explanatory-speaker-main}

A theory of speakers based on noise tolerance does not seem to explain the flipped entailment test.
The noise-tolerant speaker accounts for repetition, but we also saw other kinds of redundancy in the data.
In particular, \Cref{ex:oysters,ex:explanation} show redundant sentences can occur at the end of an explanation or logical argument.
One account could be that
an initial explanation can dramatically lower the processing cost of a later conclusion, and that speakers consider this when selecting utterances.
This is not modeled by the Gricean speaker whose processing cost $c(y)$ is independent of the context $x$.
We thus reformulate the cost $c(y \mid x)$ as context-dependent.
The impact of $x$ on cost is measured by $\Delta(x, y) \triangleq c(y) - c(y \mid x)$:
a large $\Delta(x, y)$ indicates a concise but helpful explanation $x$ before conclusion $y$.
If $\Delta(x, y)$ is large enough, the speaker will prefer to say $xy$ as opposed to just $y$.

\paragraph{Flipped Test.}
Let $E(x, y)$ be the desired semantic value of the entailment test.
With an explanatory speaker, the test score becomes
(see \Cref{sec:explanatory-speaker}):
\begin{equation*}
    \hat E_p(x, y) = E(x, y) + \Delta(x, y) - \Delta(y, y) .
\end{equation*}
If we assume $\Delta(x, y)$ dominates $E(x, y)$, the test score can \emph{increase} when $x$ entails $y$ because $x$ will often explain $y$.
This might explain the flipped test pattern.
However, to be more complete, this account should be more precise about what factors influence $c(y \mid x)$ and predict why the original test outperformed the flipped test in some cases.

\subsection{Discussion}

Since we found that the entailment test was flipped in practice and that there are cases where humans are more redundant than Gricean speakers, we explored extensions to the Gricean speaker that could more accurately account for human redundancy and thus better explain the flipped test. 
We first considered a test that accounts for redundancy due to noise tolerance, finding that this likely could not explain the flipped test.
Motivated by \Cref{sec:corpus-study}, we then turned to explanations as another source of human redundancy and showed how accounting for explanations might predict the flipped test.
\footnote{Another reason speakers may be redundant, which we have not considered, is to trigger the listener to reanalyze the question under discussion. E.g., \Cref{ex:emphasis} may prompt the listener to infer the speaker's goal is to express frustration rather than convey the facts of their order.}
We take this as encouraging evidence for pursuing pragmatic theories that explicitly account for explanations.

Stepping back, we have been able to use LMs as a source of data about sentence co-occurrences to test pragmatics theories and motivate alternatives, in the spirit of \citet{harris1954distributional}'s idea that corpus data should be the empirical foundation of linguistic theory.
A fundamental problem with using corpus data has been data sparsity,
but LMs can alleviate this by letting us interpolate the likelihood of arbitrary sentences.
We believe this could be a promising paradigm for future research in computational pragmatics to complement human subject experiments.

\section{Conclusion}

Our results show that sentence co-occurrence probabilities computed by LMs can predict entailment relationships, with a stronger effect for better LMs. This suggests these LMs are implicitly modeling semantic properties of text to some extent in order to predict the next token, in line with \citet{harris1954distributional}'s proposal that sentence co-occurrences can serve as data for building a theory of semantics.
However, the best empirical test for entailment we found was flipped compared to \citet{merrill-etal-2022-entailment}'s theoretical test. This suggests a more nuanced theory of pragmatics beyond Gricean speakers is needed to explain how entailment relationships are reflected in sentence co-occurrences. Our corpus study revealed that humans in corpora produce more contextually entailed sentences than idealized Gricean speakers, suggesting pragmatic theories that better handle redundancy might explain our findings.

We took a first step by considering how to model redundancy due to noise tolerance and explanation, but the job is far from done.
Rather, our findings call for future work that more completely accounts for the pragmatics of redundancy, especially concerning explanations. This can both advance linguistic theory and serve as a foundation for understanding how meaning can be inferred from a corpus, as well as
as the potential limits of distributional semantics and LMs.

\section*{Limitations}
Regarding the theoretical foundations for the entailment test, \citet{merrill-etal-2022-entailment} indicate in an erratum that the entailment test may have false positives for rare sentences pairs that are nearly contradictory.
Further, the theory may be less applicable to LMs that have undergone an alignment process like RLHF.
Overall, these qualifications to the test theory increase the value of our empirical study of whether the test works in practice.

Regarding our analysis of our results, we have assumed that the flipped entailment test pattern reflects differences between Gricean speakers and human speakers in corpora, but it, in principle, systematic estimation errors by LMs could explain the flipped entailment test pattern independent of the distribution of strings in the training corpus.

\section*{Acknowledgements}
We thank Emmanuel Chemla, Noah Goodman, Sophie Hao, He He, Nitish Joshi, Alisa Liu, Ashish Sabharwal, and Benjamin Spector for insightful discussions and comments. This project benefited from NYU HPC resources and expertise. WM was supported by an NSF graduate research fellowship, AI2, and Two Sigma. ZW and YK were partially supported by funds from  MIT-IBM Watson AI and Amazon grants.

\bibliography{references}
\clearpage
\appendix
\onecolumn
\section{Test Derivation for Gricean Speakers} \label{sec:test-details}

As shown by \citet{merrill-etal-2022-entailment}, the entailment test score $\hat E_p(x, y)$ score defined in terms of co-occurrence log-probabilities is equivalent to the following semantic quantity:
\begin{equation*}
    E(x, y) \triangleq \log \frac{\EX_w [\exp(i_\ell(xy \mid w)) g(x, w)]}{\EX_w [\exp(i_\ell(x \mid w)) g(x, w)]} ,
\end{equation*}
where $g(x, w)$ captures the normalizing factor from the speaker \citep[cf.][]{merrill-etal-2022-entailment}.
\begin{proposition}[\citealp{merrill-etal-2022-entailment}]
Let $p$ be a Gricean speaker.
Then, for any $x, y$, $\hat E_p(x, y) = E(x, y)$. 
\end{proposition}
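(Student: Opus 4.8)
The plan is to unfold both sides into expectations over speaker beliefs $w$ and show they coincide term by term. First I would expand each of the four co-occurrence probabilities in $\hat E_p(x,y)$ using the Gricean speaker model $p(z) = \EX_w\left[\prod_i p(z_i \mid z_{<i}, w)\right]$ together with $p(y \mid x, w) \propto \exp(i_\ell(y \mid x, w) - c(y))$. For example, $p(xy) = \EX_w\left[p(x \mid w)\, p(y \mid x, w)\right]$, where $p(x \mid w) = \exp(i_\ell(x \mid w) - c(x)) / Z(w)$ for the appropriate normalizer $Z(w)$ over first sentences, and similarly $p(y \mid x, w) = \exp(i_\ell(y \mid x, w) - c(y)) / Z(x, w)$. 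The key bookkeeping step is to collect the normalizers and the context-dependent information terms into the factor $g(x, w)$: writing $i_\ell(xy \mid w)$ for the total information conveyed by the pair (so that $i_\ell(xy \mid w) = i_\ell(x \mid w) + i_\ell(y \mid x, w)$ additively, as the model's information measure is presumably designed to be), one sees that $p(xy) = \EX_w\left[\exp(i_\ell(xy \mid w))\, g(x, w)\right] \cdot e^{-c(x) - c(y)}$ for a suitable $g(x, w)$ absorbing $1/(Z(w) Z(x, w))$, and $p(x\$) = \EX_w\left[\exp(i_\ell(x \mid w))\, g(x, w)\right] \cdot e^{-c(x) - c(\$)}$ using that $i_\ell(\$ \mid x, w) = 0$ and the same $g(x,w)$ normalizer.

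Next I would do the same for the right-hand side, replacing $x$ by $y$ everywhere: $p(yy) = \EX_w\left[\exp(i_\ell(yy \mid w))\, g(y, w)\right] \cdot e^{-2c(y)}$ and $p(y\$) = \EX_w\left[\exp(i_\ell(y \mid w))\, g(y, w)\right] \cdot e^{-c(y) - c(\$)}$. Taking logs and forming the combination $\log p(xy) - \log p(x\$) - \log p(yy) + \log p(y\$)$, the cost terms telescope: the LHS contributes $-c(y) + c(\$)$ and the RHS contributes $+c(y) - c(\$)$, so all $c$ terms cancel. What remains is
\begin{equation*}
\log \frac{\EX_w[\exp(i_\ell(xy \mid w)) g(x,w)]}{\EX_w[\exp(i_\ell(x \mid w)) g(x,w)]} - \log \frac{\EX_w[\exp(i_\ell(yy \mid w)) g(y,w)]}{\EX_w[\exp(i_\ell(y \mid w)) g(y,w)]} .
\end{equation*}
The second fraction is exactly $E(y, y)$, which equals $\log \frac{\EX_w[\exp(i_\ell(yy \mid w)) g(y,w)]}{\EX_w[\exp(i_\ell(y \mid w)) g(y,w)]}$; but $i_\ell(yy \mid w) = i_\ell(y \mid w) + i_\ell(y \mid y, w)$ and $i_\ell(y \mid y, w) = 0$ since $y$ conveys no new information after itself, so $E(y,y) = 0$. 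Hence the whole expression collapses to $E(x, y)$, as claimed.

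The main obstacle I anticipate is getting the definition of $g(x, w)$ and the information measure $i_\ell$ exactly right so that the normalizers genuinely factor as a common $g(x,w)$ shared between the $xy$ and $x\$$ terms — this requires that the second-sentence normalizer $Z(x,w)$ depends only on $x$ and $w$ (true by the model) and that the notation $i_\ell(xy \mid w)$ really is defined as the cumulative information of the pair. I would need to check carefully against \citet{merrill-etal-2022-entailment}'s conventions (in particular whether $i_\ell$ is additive over sentences and whether $\$$ is stipulated to carry zero information and how its cost enters), since any non-additivity would break the clean telescoping. Modulo matching those conventions, the argument is a direct substitution-and-cancellation computation with no deeper content.
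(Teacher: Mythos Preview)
Your proposal is correct and follows essentially the same route as the paper's proof: both compute $\log p(xy)-\log p(x\$)=E(x,y)-c(y)+c(\$)$ (and the analogous identity with $y$ in place of $x$), use $E(y,y)=0$, and observe that the remaining cost terms cancel. The only difference is packaging: the paper quotes this intermediate identity directly from \citet[Appendices~C and~H]{merrill-etal-2022-entailment} and writes the cost cancellation compactly as $c(xy^2\$)-c(xy^2\$)$, whereas you unfold the Gricean speaker definition and verify from scratch that the normalizers factor into a common $g(x,w)$. Your worries about additivity of $i_\ell$ and the role of $\$$ are exactly the conventions the original paper adopts, so the computation goes through as you wrote it.
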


\begin{proof}
    We recount an abbreviated version of the proof from \citet[Appendices C and H]{merrill-etal-2022-entailment}. We use the fact that, for any $x, y$,
    \begin{equation*}
        \log p(xy) - \log p(x\$) = E(x, y) - c(xy) + c(x\$).
    \end{equation*}
    Applying this property to both sides of $\hat E_p(x, y)$ yields
    \begin{align*}
        \hat E_p(x, y)
        &= \log p(xy) - \log p(x\$) - \log p(yy) + \log p(y\$) \\
        &= E(x, y) - c(xy) + c(x\$) - \cancel{E(y, y)} + c(yy) - c(y\$) \\
        &= E(x, y) + \cancel{c(xy^2\$)} - \cancel{c(xy^2\$)} .
    \end{align*}
    We conclude that $\hat E_p(x, y) = E(x, y)$. 
\end{proof}

Crucially, $E(x, y)$ is closely related to entailment.
If $x$ entails $y$, then $y$ conveys no information after $x$, so $E(x, y) = 0$. On the other hand, if $E(x, y) = 0$, then it must either be that a) $x$ entails $y$ or b) $y$ nearly contradicts $x$, meaning the probability that $x, y$ are consistent is small \citep[][Erratum]{merrill-etal-2022-entailment}.
Assuming near contradiction is unlikely, the entailment test (since it computes $E$) is then effectively a test for entailment defined purely in terms of sentence co-occurrence probabilities.

\section{Noise-Tolerant Speakers} \label{sec:noisy-speaker}

We now formalize a model of noise-tolerant speakers that can account for repetition (\Cref{ex:emphasis}).
Our speaker is inspired by \citet{degen2019redundancy}'s speaker designed to account for overredundant referring expressions but extends better to multiple sentences.
We assume each sentence $x$ has some probability $\epsilon_x$ of not being interpreted. When anticipating the information a listener gains from a text, a speaker marginalizes over the potential interpretations the listener might form by failing to interpret different sentences:
\begin{equation*}
    p(z \mid w) \propto \EX_{e} [ \exp(i_\ell(e \mid w)) ] \exp(-c(z)) ,
\end{equation*}
where $e$ is a set of indices for sentences in $z$ that are full comprehended.
Formally, $e$ is a subset of $z$'s indices representing a subsequence.
Note that $i_\ell(e \mid w)$ is defined in the natural way: it is the information a listener would get from just the sentences of $z$ activated in $e$ and not the other ones. This implicitly depends on $z$.
The distribution of $e$ is determined by $\epsilon$'s for each sentence in $z$:
\begin{equation*}
    p(e \mid w, z) = \prod_{t=1}^n \begin{cases}
        1 - \epsilon_{z_t} & \textrm{if} \; t \in e \\
        \epsilon_{z_t} & \textrm{otherwise.}
    \end{cases}
\end{equation*}

\subsection{Theoretical Result}
The original entailment test does not hold for noise-tolerant speakers, but a straightforward extension does. For any $n \geq 1$, we define the extended test as
\begin{equation}
\begin{split} \label{eq:repetition}
    \hat E_p^n(x, y)
    &\triangleq \log p(x^ny) - \log p(x^n\$) \\
    &- \log p(y^{n+1}) + \log p(y^n\$) .
\end{split}
\end{equation}
This extended test (with $p$ as a noise-tolerant speaker) approximates the original test for a Gricean speaker, with error vanishing exponentially in $n$:
\begin{proposition}
    Let $p$ be a noise-tolerant speaker.
    As $n$ increases, $\hat E_p^n(x, y)$ converges to $E(x, y)$ with error vanishing exponentially in $n$.
\end{proposition}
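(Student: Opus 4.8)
The plan is to make quantitative the intuition that repeating $x$ $n$ times drives the listener's residual uncertainty about whether $x$ was received to zero geometrically, so that a noise-tolerant speaker conditioned on the prefix $x^n$ behaves like a Gricean speaker conditioned on $x$; the claim then follows by reducing to the Gricean identity $\hat E_p = E$ of Proposition~1. Concretely, write $M(z\mid w) \triangleq \EX_e[\exp(i_\ell(e\mid w))]$ for the noise-marginalized information of a sequence, so that $p(z\mid w)\propto M(z\mid w)\exp(-c(z))$. The same manipulation as in Proposition~1 gives, for the first pair of terms,
\begin{equation*}
\log p(x^n y) - \log p(x^n\$) = \log\frac{\EX_w[M(x^n y\mid w)\,g_n(x,w)]}{\EX_w[M(x^n\$\mid w)\,g_n(x,w)]} - c(x^n y) + c(x^n\$),
\end{equation*}
with $g_n(x,w)>0$ collecting the per-speaker normalizer and the shared prefix factors, and analogously for $p(y^{n+1})$ versus $p(y^n\$)$. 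Across the four terms of $\hat E_p^n$ the cost contribution $-c(x^n y)+c(x^n\$)+c(y^{n+1})-c(y^n\$)$ vanishes for every $n$ by additivity of $c$; and since $\$$ conveys no information we may replace $M(x^n\$\mid w)$ by $M(x^n\mid w)$ and $M(y^n\$\mid w)$ by $M(y^n\mid w)$. So it suffices to show that $\log\frac{\EX_w[M(x^n y\mid w)\,g_n(x,w)]}{\EX_w[M(x^n\mid w)\,g_n(x,w)]} - \log\frac{\EX_w[M(y^{n+1}\mid w)\,g_n(y,w)]}{\EX_w[M(y^n\mid w)\,g_n(y,w)]}$ converges to $E(x,y)$ at a geometric rate.

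The key step is estimating the $M$-terms, using the redundancy premise of the framework: comprehending several identical copies of a sentence conveys exactly the same information as comprehending one. Conditioning on whether at least one of the $n$ copies of $x$ is received — the complementary event has probability $\epsilon_x^n$ — gives $M(x^n y\mid w) = M_\infty(x,y\mid w) + \epsilon_x^n\,R(w)$, where $M_\infty(x,y\mid w)$ is the value when $x$ has been received and $R(w)$ is a difference of convex combinations of exponentials of information values, hence bounded uniformly in $w$ by the same integrability that makes $E$ well-defined; similarly $M(x^n\mid w) = e^{i_\ell(x\mid w)} + O(\epsilon_x^n)$, and the $y$-terms converge at rate $O(\epsilon_y^n)$. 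Thus every $M$-term, and hence (after normalizing) $g_n$, equals its $n\to\infty$ limit up to a uniform $O(\max(\epsilon_x,\epsilon_y)^n)$ error. Interchanging $\lim_n$ with $\EX_w$ by dominated convergence (licensed by the uniform bound) and with $\log$ (the two ratios stay bounded away from $0$ and $\infty$) then shows that $\hat E_p^n(x,y)$ equals the $n\to\infty$ limit of the expression above up to $O(\max(\epsilon_x,\epsilon_y)^n)$, the advertised exponential rate.

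What remains is to check that this limit equals $E(x,y)$. The second log-ratio goes to $0$ since $M(y^{n+1}\mid w)$ and $M(y^n\mid w)$ both tend to $e^{i_\ell(y\mid w)}$ — this is the statement $E(y,y) = 0$ inherited from Proposition~1. For the first log-ratio, once $x$ is reliably received the noise-tolerant conditional collapses to the Gricean one, so $g_n(x,\cdot)$ converges to the Gricean factor $g(x,\cdot)$ and numerator and denominator become the $w$-averages of $e^{i_\ell(xy\mid w)}$ and $e^{i_\ell(x\mid w)}$, i.e.\ $\exp E(x,y)$. I expect the delicate part to be precisely this last identification: handling the single non-repeated occurrence of $y$ in $x^n y$, whose residual noise $\epsilon_y$ must be shown to be absorbed (e.g.\ by the convention that the sentence currently being uttered is reliably received, or by the fact that it cancels in the entailment regime $E(x,y)=0$ where the test is applied), together with verifying $g_n \to g$ and securing the uniform-in-$w$ bounds that justify commuting $\lim_n$ with $\EX_w$; the geometric convergence of the $M$-terms themselves is routine.
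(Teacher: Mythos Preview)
Your route is sound and in fact more faithful to the noise-tolerant speaker definition than the paper's own argument. The paper does not work with your $M(z\mid w)=\EX_e[\exp(i_\ell(e\mid w))]$ (the quantity that actually enters the speaker model) but with the linearized expected utility $\dot I(z\mid w)\triangleq\EX_e[i_\ell(e\mid w)]$, and it proceeds by directly computing conditional increments: $\dot I(y\mid x^n,w)=\epsilon_x^n(1-\epsilon_y)\,i_\ell(y\mid w)+(1-\epsilon_x^n)(1-\epsilon_y)\,i_\ell(y\mid x,w)$ and $\dot I(x\mid x^n,w)=\epsilon_x^n(1-\epsilon_x)\,i_\ell(x\mid w)$. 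These go to $(1-\epsilon_y)\,i_\ell(y\mid x,w)$ and $0$ at rate $\epsilon_x^n$, and that is essentially the whole proof---no ratio manipulations, no dominated-convergence bookkeeping. Your decomposition via $M$ and cost-cancellation is a genuinely different and more careful packaging of the same underlying idea (repeated $x$ kills the event ``$x$ never received'' geometrically).

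On the point you rightly flag as delicate---the residual $\epsilon_y$ on the single unrepeated $y$---the paper does not absorb it into an exact identity $\hat E_p^n\to E$. It simply notes that since $\epsilon_y<1$, the limiting quantity is a positive multiple of $i_\ell(y\mid x,w)$ and hence vanishes iff $i_\ell(y\mid x,w)=0$ for all $w$, so the test still \emph{detects} entailment. In your $M$-formulation the analogous limit of the first log-ratio is $\log\bigl(\epsilon_y+(1-\epsilon_y)e^{E(x,y)}\bigr)$, which is zero exactly when $E(x,y)=0$ and positive otherwise: that is your option~(b), and it is precisely how the paper closes the argument---commit to it rather than hoping for option~(a). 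Your worry about $g_n\to g$ is lighter than you think: with the whole-sequence noise-tolerant speaker the only per-$w$ weight is the global normalizer $1/Z(w)$, which is independent of $n$ and of the prefix, so there is nothing to prove there.
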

\begin{proof}
The idea is that, unlike a Gricean speaker, a noise-tolerant speaker will produce $p(ab)$ to account for the chance that $a$ was not interpreted. If $a$ repeats several times, the chance $a$ was not interpreted goes to $0$.

In order to show that the original test fails with this speaker and work out an alternative, we first work out some basic properties of this speaker's utility.
Let $\dot I(z \mid w) \triangleq \EX_{e} [ i_\ell(e \mid w) ]$ be the expected utility of $z$.
We can first characterize the utility of a $2$-gram $xy$ under the noisy-channel speaker:
\begin{align*}
    \dot I(xy \mid w)
    &= \epsilon_x \epsilon_y \cdot 0 + (1 - \epsilon_x) \epsilon_y i_\ell(x \mid w) + \epsilon_x (1 - \epsilon_y) i_\ell(y \mid w) + (1 - \epsilon_x)(1 - \epsilon_y) i_\ell(xy \mid w) \\
    &= (1 - \epsilon_x) \epsilon_y i_\ell(x \mid w) + \epsilon_x (1 - \epsilon_y) i_\ell(y \mid w) + (1 - \epsilon_x)(1 - \epsilon_y) i_\ell(xy \mid w) .
\end{align*}
We can apply this to get the expected utility of the utterances $xx$ and $x\$$ under the noisy-channel speaker:
\begin{align*}
    \dot I(xx \mid w)
    &= \epsilon_x^2 \cdot 0 + 2 (1 - \epsilon_x) \epsilon_x i_\ell(x \mid w) + (1 - \epsilon_x^2) i_\ell(x \mid w) \\
    &= (1 - \epsilon_x^2) i_\ell(x \mid w) \\
    \dot I(x\$ \mid w)
    &= (1 - \epsilon_x) \epsilon_\$ i_\ell(x \mid w) + (1 - \epsilon_x)(1 - \epsilon_\$) i_\ell(x \mid w) \\
    &= (1 - \epsilon_x) i_\ell(x \mid w) .
\end{align*}

We can now see that the original test does not work under a noise-tolerant speaker.
The original entailment theorem worked by checking $i_\ell(y \mid x, s) = i_\ell(x \mid x, s)$ to see whether $y$ is informative after $x$.
Naively applying the original entailment test with a noise-tolerant speaker, however, will use $\dot I$ in place of $i_\ell$.
We can see that this does not represent the same quantity if $\epsilon_x, \epsilon_y$ are non-negligible:
\begin{align*}
    \dot I(x \mid x, w) &= \epsilon_x (1 - \epsilon_x) i_\ell(x \mid w) \\
    \dot I(y \mid x, w) &= \epsilon_x (1 - \epsilon_y) i_\ell(y \mid w) + (1 - \epsilon_x)(1 - \epsilon_y) i_\ell(y \mid x, w) .
\end{align*}

However, for the new test, we find the following:
\begin{align*}
    \dot I(x \mid x^n, w)
    &= \epsilon_x^n (1 - \epsilon_x) i_\ell(x \mid w)
    \approx 0 \\
    \dot I(y \mid x^n, w)
    &= \epsilon_x^n (1 - \epsilon_y) i_\ell(y \mid w) + (1 - \epsilon_x^n)(1 - \epsilon_y) i_\ell(y \mid x, w)
    \approx (1 - \epsilon_y) i_\ell(y \mid x, w) .
\end{align*}
For large $n$, this overcomes the $\epsilon$'s since it means the test checks whether $i_\ell(y \mid x^n, w)$ is nonzero for all $w$ (assuming $\epsilon_y < 1$, i.e., a human can possibly evaluate $y$).
\end{proof}

\subsection{Empirical Results}

\begin{figure*}[t!]
    \centering
    \includegraphics[width=0.48\textwidth]{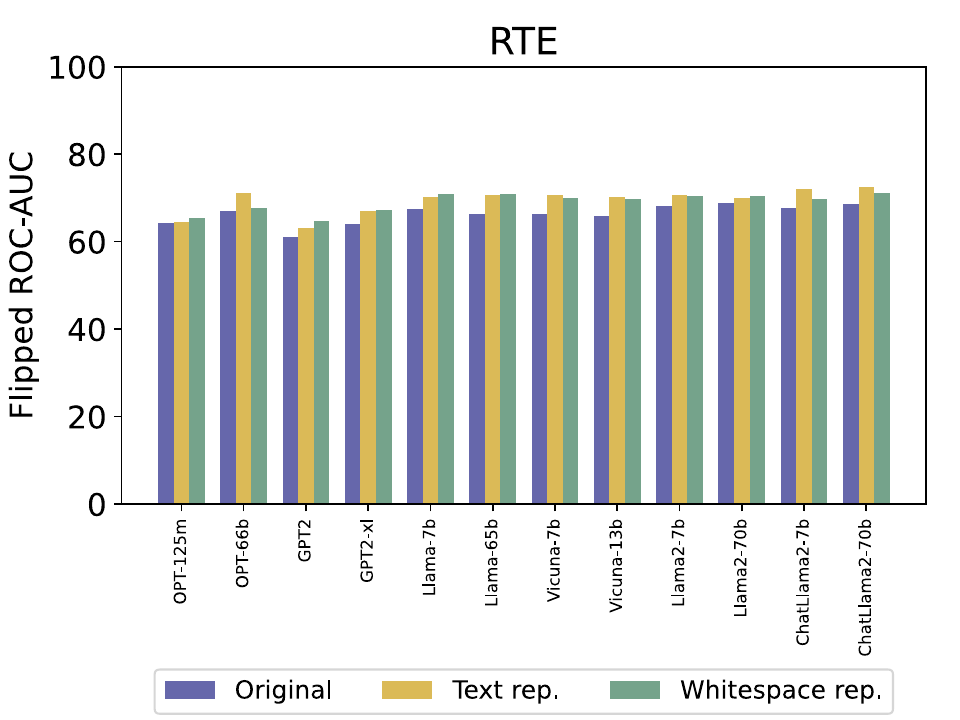}
    \includegraphics[width=0.48\textwidth]{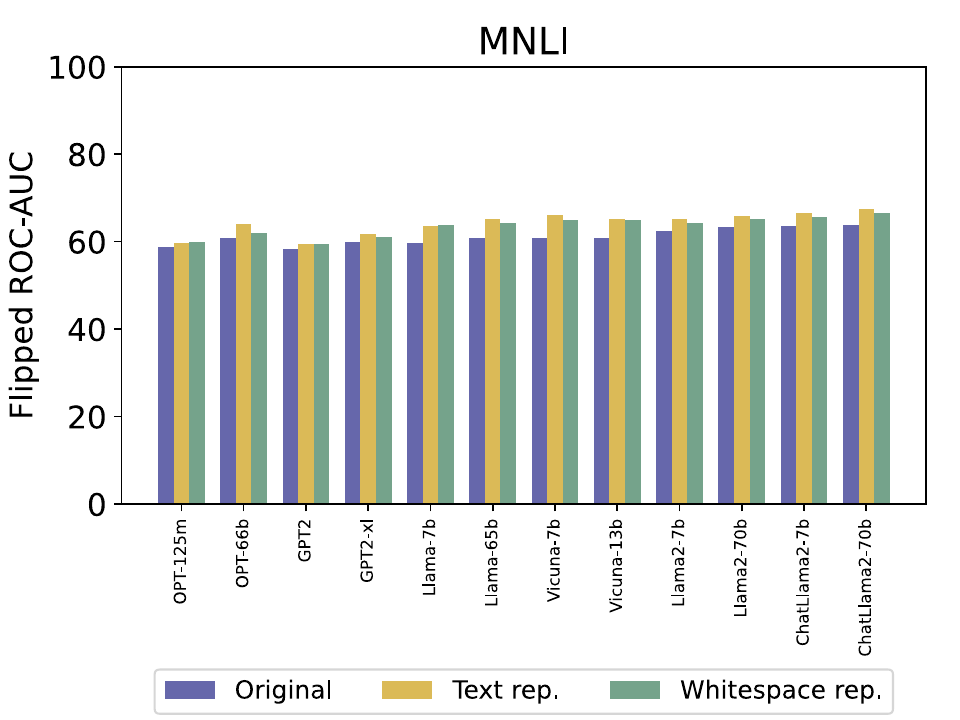}
    \caption{Performance of noise-tolerant (\S\ref{sec:noisy-speaker}) vs.~original test on RTE training set and MNLI matched validation set.}
    \label{fig:repetition}
\end{figure*}

\begin{figure}[t!]
    \centering
    \includegraphics[width=0.48\textwidth]{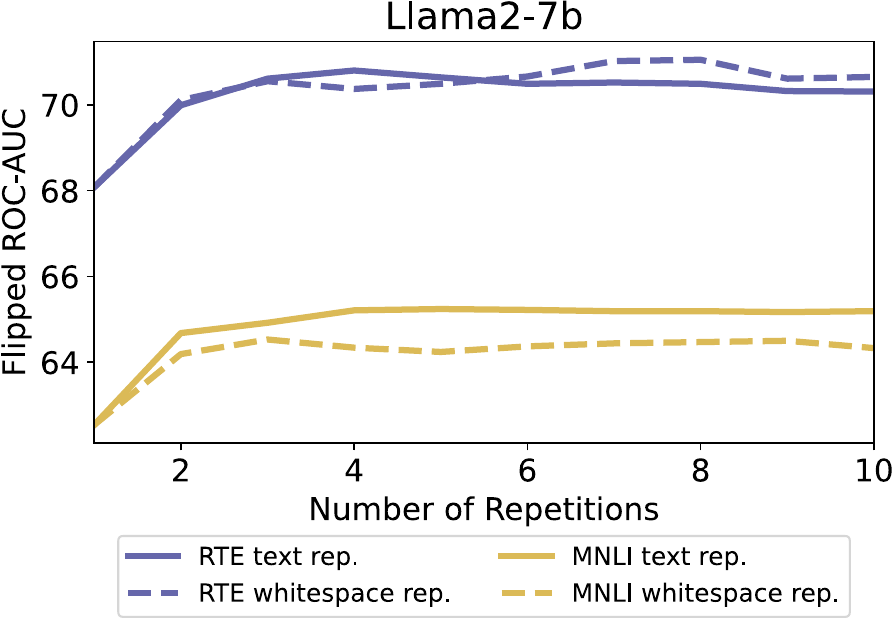}
    \caption{Performance of the noise-tolerance test with different numbers of repetitions (values of $n$ in \Cref{eq:repetition}). The original test is $n=1$.}
    \label{fig:num-repetitions}
\end{figure}

We compare the noise-tolerant test with $n=5$ repetitions against the original test, using the RTE training set and the MNLI matched validation set.\footnote{Due to the repetitions multiplicatively increase sequence length, running this test on the MNLI training set, as we do in the other experiments, was not feasible for us.} As shown in \Cref{fig:repetition}, the flipped noise-tolerant test consistently detects entailment better than the original flipped test. However, the fact that the test still works better flipped is just as unexpected with the noise-tolerant test as with the original test.

We were thus skeptical whether the boost in performance from the noise-tolerant test was due to more realistic speaker assumptions and aimed to access whether there could be a confounding explanation.
In particular, in addition to accounting for ways speakers can be redundant, the noise-tolerant grants the LM additional tokens and thus more steps of computation, which could enable more closely approximating each log-likelihood \citep{goyal2023think}. To control for this, we introduce a ``pause token'' test where, for each term $\log p(ab)$, spaces are inserted between $a$ and $b$ to add the same number of tokens that would be added by replacing $a$ with $a^n$.\footnote{The tokenizer for Llama models treats 16 consecutive whitespaces as a single token. We hence insert 16 times more whitespaces for Llama-based models to control for the token count.} Assuming spaces carry no semantics, the pause token test should measure the same quantity as the original entailment test, but with more compute than the noise-tolerant test.

As shown in \Cref{fig:repetition}, the pause token test outperforms the original test, suggesting the computational benefit of additional tokens may explain the test improvement. For many datasets, the pause token test performs slightly worse than the noise-tolerant test, but because the absolute difference is small and not consistent, we do not take this as evidence that the noise-tolerant test provides a benefit beyond more tokens of computation.
Further, \Cref{fig:num-repetitions} shows that increasing the number of repetitions yields roughly monotonic but diminishing returns, as might be expected for a computational resource.
Overall, we conclude the stronger performance of the noise-tolerant test likely reflects the greater computational power of padding tokens and not better assumptions about human speakers.

\section{Explanatory Speakers} \label{sec:explanatory-speaker}

The only change we make to the speaker to support explanations is generalizing the cost $c(y \mid x)$ to depend on the prior context.
We assume that $c(\$ \mid z) = c(\$)$ for all $z$.

\begin{proposition}
    Let $p$ be an explanatory speaker. Then, for any $x, y$,
    \begin{equation*}
        \hat E_p(x, y) = E(x, y) + \Delta(x, y) - \Delta(y, y) .
    \end{equation*}
\end{proposition}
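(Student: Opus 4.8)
The plan is to follow the proof of Proposition~1 (the Gricean case) and track precisely where the context-dependence of the cost enters. The only change from a Gricean to an explanatory speaker is that the per-sentence factor becomes $\exp\big(i_\ell(y\mid x,w)-c(y\mid x)\big)$ instead of $\exp\big(i_\ell(y\mid x,w)-c(y)\big)$, and crucially $c(y\mid x)$ is still independent of the belief state $w$. So Merrill et al.'s derivation of the co-occurrence identity should go through essentially verbatim, and I would first re-establish its explanatory analog,
\[
\log p(xy) - \log p(x\$) = E(x,y) - c(y\mid x) + c(\$).
\]
In this derivation the cost of the first sentence cancels because the two sequences $xy$ and $x\$$ share the prefix $x$; the end-of-text symbol conveys no information, so $i_\ell(x\$\mid w)=i_\ell(x\mid w)$; and $c(\$\mid x)=c(\$)$ by the standing assumption. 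As in the Gricean proof, the $w$-dependent normalizing constants of the speaker's conditional distributions get absorbed into $g(x,w)$ (now the explanatory speaker's normalizer), while the $w$-independent cost terms factor out of $\EX_w$.

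Given this identity, I would apply it to the pair $(x,y)$, getting $\log p(xy)-\log p(x\$) = E(x,y) - c(y\mid x) + c(\$)$, and to the pair $(y,y)$, getting $\log p(yy)-\log p(y\$) = E(y,y) - c(y\mid y) + c(\$)$. Since $y$ trivially entails $y$, the continuation $y$ conveys no information after $y$, so $i_\ell(yy\mid w)=i_\ell(y\mid w)$ and the numerator and denominator defining $E(y,y)$ coincide: $E(y,y)=0$, exactly the cancellation used in Proposition~1. Subtracting, the $c(\$)$ terms cancel, leaving
\[
\hat E_p(x,y) = \big(E(x,y) - c(y\mid x) + c(\$)\big) - \big(0 - c(y\mid y) + c(\$)\big) = E(x,y) + c(y\mid y) - c(y\mid x).
\]
Finally I would rewrite the cost difference with $\Delta(a,b)\triangleq c(b)-c(b\mid a)$: since $c(y\mid y)-c(y\mid x) = \big(c(y)-c(y\mid x)\big) - \big(c(y)-c(y\mid y)\big) = \Delta(x,y)-\Delta(y,y)$, this gives $\hat E_p(x,y)=E(x,y)+\Delta(x,y)-\Delta(y,y)$, as claimed.

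The main obstacle is the first step: confirming that Merrill et al.'s derivation of the pairwise log-probability identity is genuinely insensitive to making the cost context-dependent. It should be, for two reasons: $c(y\mid x)$ does not depend on $w$, so it pulls out of the expectation over beliefs just as the constant $c(y)$ did; and the speaker's conditional normalizer, although it now depends on $\{c(y'\mid x)\}_{y'}$, is still a function of $x$ and $w$ only and can therefore still be folded into $g(x,w)$. One caveat to state explicitly is that $E(x,y)$ in the theorem is to be read with respect to this explanatory $g$; but the two facts we actually invoke --- $E(y,y)=0$ and the additive appearance of $E(x,y)$ --- hold regardless of the choice of speaker normalizer, so this is harmless. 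A minor point worth double-checking is that the first-sentence cost really does cancel within each pairwise identity even if one permits an arbitrary fixed context before the pair, which it does because the two sequences in each identity share that entire prefix.
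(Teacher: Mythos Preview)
Your proposal is correct and follows essentially the same route as the paper: establish the pairwise identity $\log p(xy)-\log p(x\$)=E(x,y)-c(y\mid x)+c(\$)$ for the explanatory speaker, apply it to $(x,y)$ and $(y,y)$, use $E(y,y)=0$ and $c(\$\mid z)=c(\$)$, subtract, and rewrite $c(y\mid y)-c(y\mid x)$ as $\Delta(x,y)-\Delta(y,y)$ by adding and subtracting $c(y)$. The paper's proof is terser (it starts directly from the combined four-term expression rather than the two pairwise identities), but the algebra and the key observations are identical to yours.
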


\begin{proof}
By definition,
\begin{align*}
    \hat E_p(x, y)
    &= E(x, y) - c(xy) + c(x\$) + c(yy) - c(y\$) \\
    &= E(x, y) - c(y \mid x) + c(\$ \mid x) + c(y \mid y) - c(\$ \mid y) .
\end{align*}
These cost terms do not all cancel out (as for Gricean speakers). Instead, we get
\begin{align*}
    \hat E_p(x, y)
    &= E(x, y) - c(y \mid x) + \cancel{c(\$)} + c(y \mid y) - \cancel{c(\$)} \\
    &= E(x, y) - c(y \mid x) + c(y \mid y) + c(y) - c(y) \\
    &= E(x, y) + \Delta(x, y) - \Delta(y, y) .
\end{align*}
\end{proof}

\subsection{Further Details and Experiments} \label{sec:explanations-details}

To be convincing, the explanatory speaker account should ideally explain why the original test worked better than the flipped test for some targeted cases like logical connectives and numbers (\Cref{fig:aucroc-histogram}).
The connectives could possibly be explained by the fact that the connectives hypotheses introduced new entities that did not occur in the premise (cf.~\Cref{ex:connectives}).
Because these entities do not exist in the discourse, it would be infeasible for a listener to reason about whether they are entailed in advance, making semantic priming unlikely.
We would thus expect $\Delta(x, y)$ and the test to better match $E(x, y)$ in this case.

\begin{figure*}[!t]
    \includegraphics[width=.48\columnwidth]{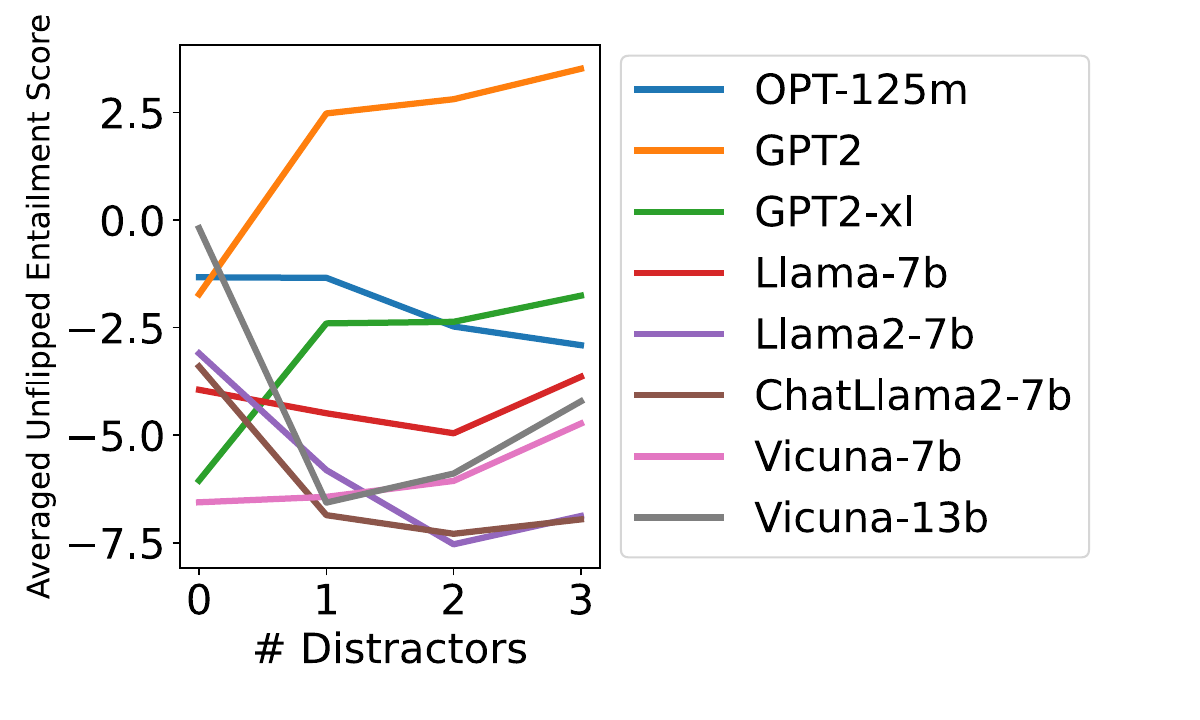}
    \includegraphics[width=.48\columnwidth]{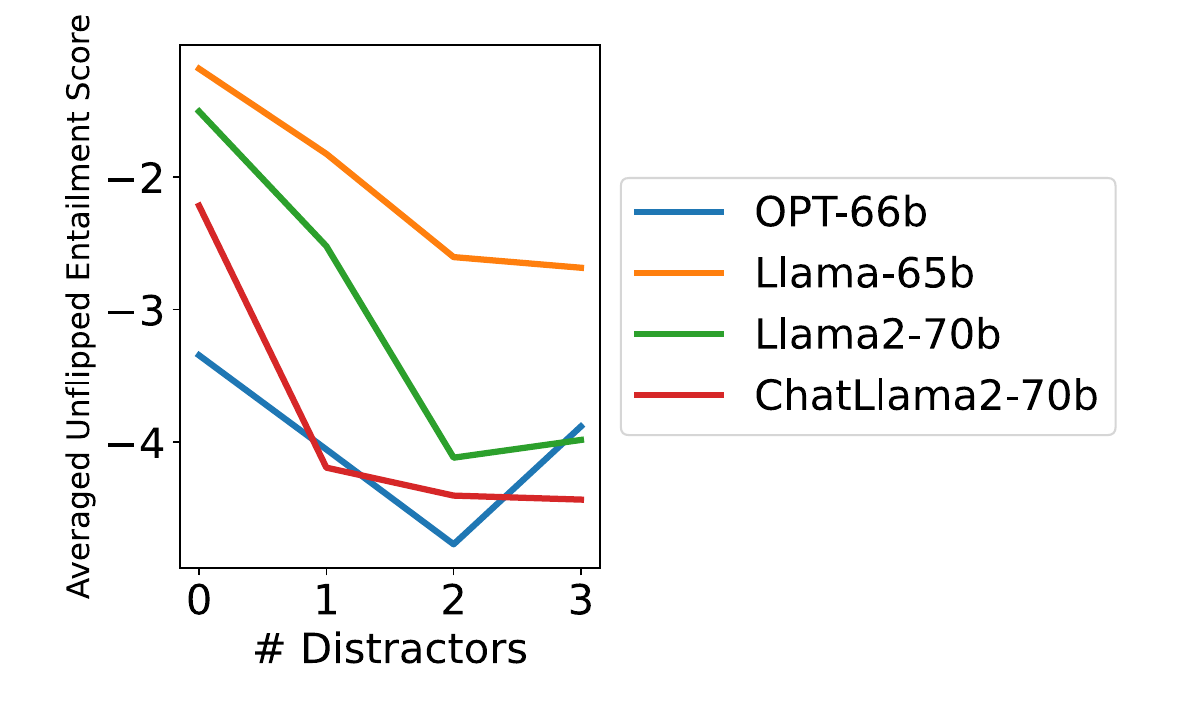}
    \caption{Unflipped entailment test score as a function of the number of distractors in the premise, with $<$65b models (left) and $\ge$65b models (right), for the RTE dataset.}
    \label{fig:distractors-70b}
\end{figure*}

The semantic priming account predicts that, for entailed pairs, the test score should reflect how much $x$ semantically primes $y$. Assuming adding distractors to the premise reduces semantic priming, it thus predicts that the entailment score should decrease as more distractors are added to the premise. We test this by generating entailment pairs with distractors in the premise like the following:
\begin{exe}
    \ex Olivia lives in Paris. James lives in Tokyo. \entailed{Olivia lives in France.}
\end{exe}
As shown in \Cref{fig:distractors-70b}, this pattern holds for all $\sim$70b LMs we considered, although the results for LMs of smaller scales are more inconsistent.
We take this as weak evidence that the speakers LMs are modeling (i.e., humans) may be accounting for the reduction in processing time that an explanation can provide.

\section{Manual Inspection of Entailment Classified by Models} \label{sec:entailment-manual-inspection}

The following are examples of premise-hypothesis pairs which were marked as entailment by both T5~\citep{honovich-etal-2022-true-evaluating} and RoBERTa \citep{liu2019roberta}.
Through manual inspection, however, we find that they were in fact incorrectly classified as such.
We include a comprehensive list of those cases as well as reasoning as to why we believe the pair is not entailment.

\begin{exe}
    \ex \textbf{Multi-News:}
    The man survived the fall and the waters.After he was rescued, he noted that a "burning platform" caused a radical change in his behaviour.We too, are standing on a "burning platform," and we must decide how we are going to change our behaviour.Over the past few months, I've shared with you what I've heard from our shareholders, operators, developers, suppliers and from you.Today, I'm going to share what I've learned and what I have come to believe. \entailed{I have learned that we are standing on a burning platform.}
\end{exe}
The premise does not contain information regarding the fact that the narrator had "learned [they] are standing on a burning platform".

\begin{exe}
    \ex \textbf{Books3:}
    That's where you're wrong.I only have negatives.Minus wishes.""E, what are you going on about?"I asked gently, leaning in and wincing as my shirt caught on the dressing. \entailed{"I only know what I don't want.}
\end{exe}
The "I only have negatives" in the premise can be interpreted as the narrator only having things that they don't want.
This is in contrast to the knowledge aspect which is brought up in the hypothesis.

\begin{exe}
    \ex \textbf{Book3:}
    I glance over my shoulder.Liam moves fast too, throwing himself at the hill to catch me.It's fine, I can outrun him over distance.All I need is a head start.So I push myself, stumbling on the dry churned-up turf.	\entailed{Behind me, Liam speeds up.}
\end{exe}
The premise indicates that "Liam moves fast...to catch me".
It does entail that he "speeds up", which is in the hypothesis.

\begin{exe}
    \ex \textbf{Wikipedia:}
    Henry admits he doesn't dance, and encourages Minnie to dance with Sidney.Henry thinks Minnie must find life with him dull, and resolves to learn to dance.He keeps this secret from her in order to surprise her on her birthday.He takes private dancing lessons, instructed by Madame Gavarni and her niece.Minnie seems to grow distant. \entailed{Henry thinks she is bored, and looks forward to surprising her with dancing.}
\end{exe}
While Henry thinking Minnie is bored and planning on surprising her with dancing, him "[looking] forward to [it]" is new information presented in the hypothesis not in the premise.

\begin{exe}
    \ex \textbf{Wikipedia:}
    "Established in 1923, it has a membership of around 230,000 and is open to past and present members of the UK Civil Service and public sector plus organisations that were formerly part of the British Civil Service, for instance Royal Mail and BT.Relatives of existing members may also join.History
    Boundless by CSMA is a mutual organisation.It was founded as the Civil Service Motoring Association in 1923 by Frank Vernon Edwards, an executive officer in the Ministry of Labour who had an interest in motorcycle trials.CSMA Club was designed to be a small motorsport organisation of around 300 members, but by 1930 the membership was over 5,000." \entailed{The membership currently stands over 230,000.}
\end{exe}
The premise does not indicate that there are more than 230,000 members, which means that the hypothesis is adding additional information not contained in the premise.

\begin{exe}
    \ex \textbf{Reuters-21578:}
    "A spokeswoman for the EC Commission said the detailed 25-page report of alleged malpractices was in response to a similar document issued by U.S. Administration officials in November, and updated a previous EC list.EC External Trade Relations Commissioner Willy De Clercq said its object was to show such actions were not solely taken by trading partners of the U.S. And that "the U.S.Were not innocents in the matter."The report covers the entire field of EC-U.S. Commercial relations and lists more than 30 obstacles ranging from tariff measures, import quotas, customs duties, anti-dumping procedures, fiscal barriers and export subsidies.The Commission said not all the barriers mentioned were necessarily inconsistent with U.S. International obligations, and emphasised many of them could be removed at upcoming international trade talks." \entailed{The purpose of the report is to make clear that trade practices which impede exports are not a unique problem only faced by U.S.}
\end{exe}
The premise describes something different from the hypothesis in that the objective of the report was to show "such actions were not solely taken by trading partners of the U.S." but also participated in by the U.S. itself.

\begin{exe}
    \ex \textbf{Yelp Review:}
    "While it looks decent on the outside and the inside, the food and service were simply terrible.Chicken was very watered down, the salsa was flavorless, and the service make a fast food chain look really good.Just a poor, poor experience at this location overall.If this was the only El Cancun in Charlotte, I would feel the same way many posters do and just never come back. Luckily for me, I live in Rock Hill." \entailed{There's an El Cancun here.}
\end{exe}
The hypothesis introduces new information about another El Cancun location being where the speaker is, which is not present in the premise.

\section{Manual Classification of Entailment Categories} \label{sec:entailment-categories}

Based on the filtered manual results described by $\cap^{+}$ in \Cref{tab:dataset-entailment}, we manually classify results into three categories: explanation, repetition, and other.
We define an entailment pair as ``repetition'' if there is a single span $s$ in the premise such that $s$ entails the hypothesis and vice versa. We define ``explanation'' as any pair where it is clear that no span in the context entails the hypothesis and is also entailed by it.
Finally, ``other'' represents cases where we cannot clearly determine these conditions.
Results of these classifications across datasets are shown in \Cref{fig:entailment-categories}. 
We acknowledge that there is some unavoidable subjectivity involved with the manual filtering and classifications, but we think that our manual classification is somewhat instructive despite this limitation.

\begin{figure}[t!]
    \centering
    \includegraphics[width=0.48\textwidth]{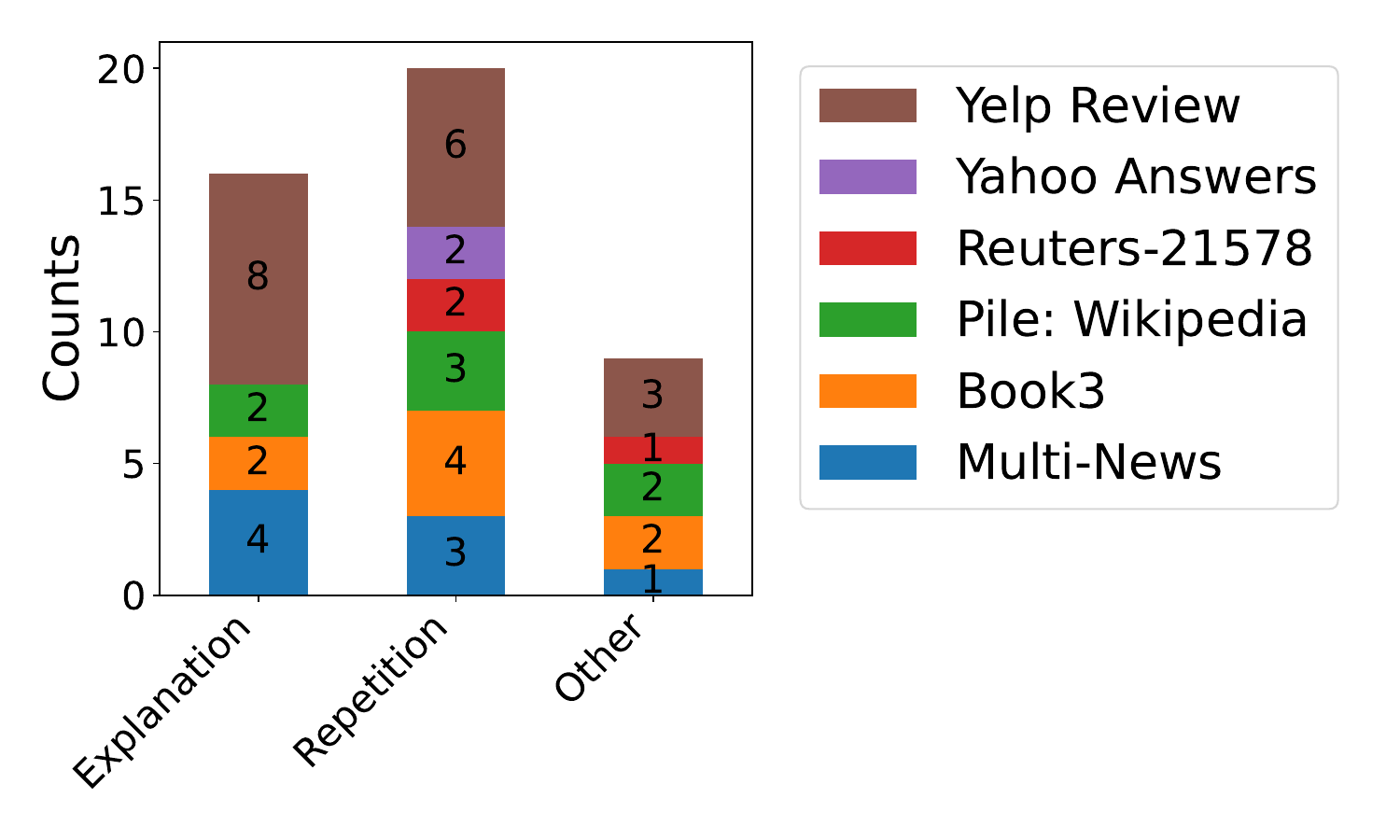}
    \caption{Frequency of occurrences of entailment categories across data sources.}
    \label{fig:entailment-categories}
\end{figure}

\section{Learned Entailment Test for More Datasets} \label{sec:learned-entailment-test-for-more-datasets}

\begin{figure*}[t!]
    \centering
    \includegraphics[width=\textwidth]{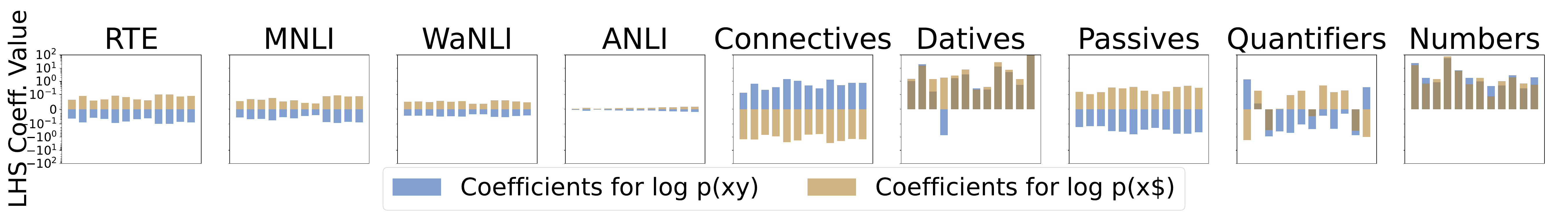}
    \includegraphics[width=\textwidth]{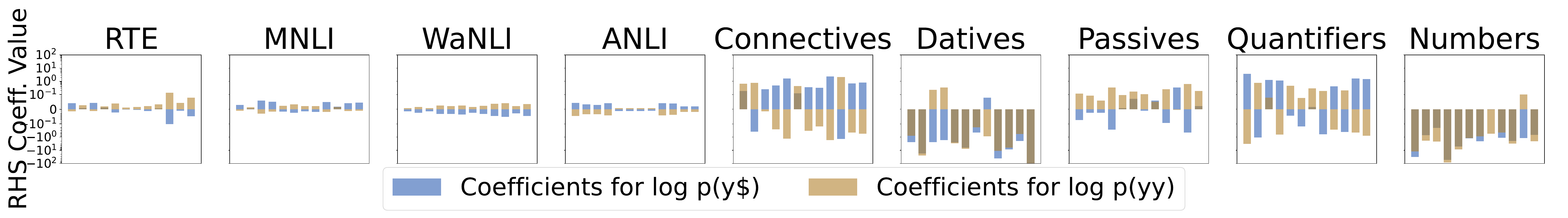}
    \caption{Learned logistic regression coefficients for the log-probability features for the broad-coverage datasets. Each bar represents one LM. For ease of visualization, $y$-axis is in log scale, except in $[-0.1, 0.1]$ where it is linear; and it is capped at $[-100, 100]$, requiring truncation in a few cases.
    }
    \label{fig:all-learned-coefficients-paired}
\end{figure*}

In \Cref{fig:all-learned-coefficients-paired}, we show the coefficients for the learned entailment test (\S\ref{sec:learning-a-distributional-entailment-test}). However, we note a caveat for the targeted evaluation datasets: because they are manually curated, there are simple dataset artifacts that can be used to distinguish between the two classes (for example, some types of hypotheses only exist for entailment instances). When we learned a classifier, such artifacts could be exploited (and we do see that they are exploited in practice). We thus highlight that the interpretation of the relevant coefficients are not straightforward.

\section{Synthetic Data for Targeted Evaluation} \label{sec:targeted-eval}

The GLUE diagnostics \citep{wang-etal-2018-glue} are not a public dataset; hence, we make our synthetic targeted evaluation data based on the GLUE design principles. We create synthetic data following the following templates, where the names and base propositions vary according to a hard-coded list.

\paragraph{Connectives.} The premise $p(a)$ entails $p(a \vee b)$ but not $p(a \wedge b)$:
\begin{exe}
    \ex I saw James. \label{ex:connectives}
    \begin{xlist}
        \ex I saw James or Olivia. \; \cmark
        \ex I saw James and Olivia. \; \xmark
    \end{xlist}
\end{exe}

\paragraph{Quantifiers.} For a non-empty domain, $\textit{all} \, a \, p(a)$ entails $\textit{some} \, a \, p(a)$ but not $\textit{no} \, a \, p(a)$:
\begin{exe}
    \ex All of the crops failed.
    \begin{xlist}
        \ex Some of the crops failed. \; \cmark
        \ex None of the crops failed. \; \xmark 
    \end{xlist}
\end{exe}

\paragraph{Numbers.} Similarly, \emph{at least two} entails \emph{at least one} but not \emph{at least three}:
\begin{exe}
    \ex At least two of the crops failed.
    \begin{xlist}
        \ex At least one of the crops failed. \cmark
        \ex At least three of the crops failed. \xmark
    \end{xlist}
\end{exe}

\paragraph{Passivization.} Given a premise with a transitive verb, the reduced passive with the original object as the subject is entailed, but the reduced passive with the original subject as the subject is not:
\begin{exe}
    \ex Olivia saw Mia.
    \begin{xlist}
        \ex Mia was seen. \quad \cmark
        \ex Olivia was seen. \quad \xmark
    \end{xlist}
\end{exe}

\paragraph{Datives.} Given a sentence with a direct object and an optional indirect object, the sentence with the indirect object removed is entailed, but the sentence with the direct object is not:
\begin{exe}
    \ex Liam baked Noah a cake.
    \begin{xlist}
        \ex Liam baked a cake. \; \cmark
        \ex Liam baked Noah. \; \xmark
    \end{xlist}
\end{exe}

\section{Language Models We Used} \label{sec:lms-we-used}

We test a variety of LM families, and for each, we use the smallest and largest public-available variant. Specifically, we use GPT-2 small (117M parameters) and XL (1.5B), OPT 125M and 66B, Llama-1 7B and 65B, Vicuna 7B and 13B, Llama-2 7B and 70B, and ChatLlama-2 7B and 70B.

\section{Dataset Stastics} \label{sec:dataset-stats}

\begin{table}[t!]
    \centering
    \begin{tabular}{lc}
        \toprule
        Dataset & \# Instances \\
        \midrule
        RTE-train & \phantom{00}2,490 \\
        MNLI-train & 392,702 \\
        MNLI-validation-matched & \phantom{00}9,815 \\
        WaNLI-train & 102,885 \\
        ANLI-train & 100,459 \\
        Connectives & \phantom{00}1,800 \\
        Quantifiers & \phantom{000,}780 \\
        Numbers & \phantom{000,}260 \\
        Passives & \phantom{00}2,160 \\
        Datives & \phantom{000,}720 \\
        \bottomrule
    \end{tabular}
    \caption{\label{tab:dataset-stats}
    The number of instances for each dataset we use.
    }
\end{table}

We report dataset statistics in \Cref{tab:dataset-stats}.

\section{Results Excluding Contradiction Instances}

In Figures~\ref{fig:aucroc-histogram-noc} to \ref{fig:all-learned-coefficients-paired-noc}, we report results for all of our analyses but excluding contradiction instances.
The motivation for this is to specifically target the relationship in scores between neutral and entailment pairs, which is the case where the empirical direction of the test is flipped compared to our theoretical expectation.

\begin{figure}[t!]
    \centering
    \includegraphics[width=0.44\textwidth]{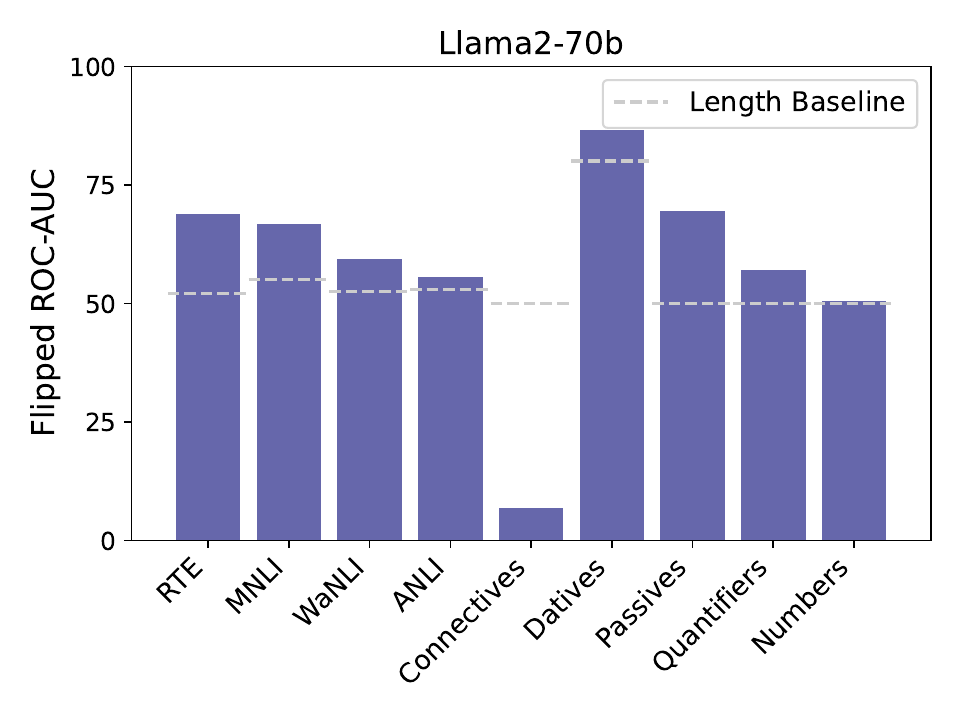}
    \vspace{-2mm}
    \caption{Flipped AUC-ROC scores of the flipped entailment test across datasets using Llama2-70b probabilities. All contradiction instances are excluded.}
    \vspace{-2mm}
    \label{fig:aucroc-histogram-noc}
\end{figure}

\begin{figure*}[t!]
    \centering
    \includegraphics[width=\textwidth]{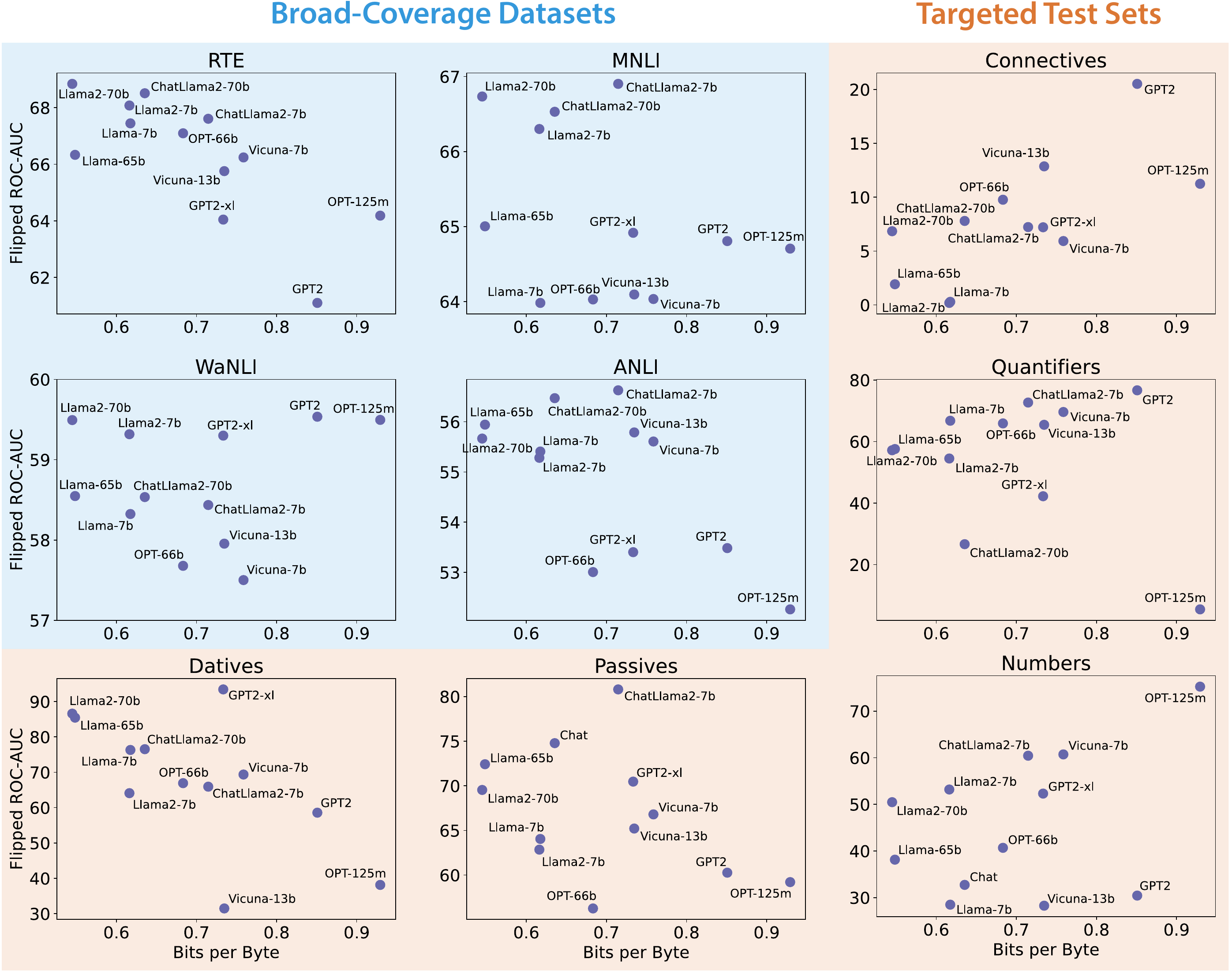}
    \caption{C4 validation bits per byte vs.~flipped AUC-ROC score for all models on broad-coverage
    and targeted datasets. Note that the scale of the $y$-axis differs for each subplot. 
     All contradiction instances are excluded.}
    \label{fig:aucroc-vs-perplexity-noc}
\end{figure*}

\begin{figure}[!t]
    \centering
    \includegraphics[width=0.48\textwidth]{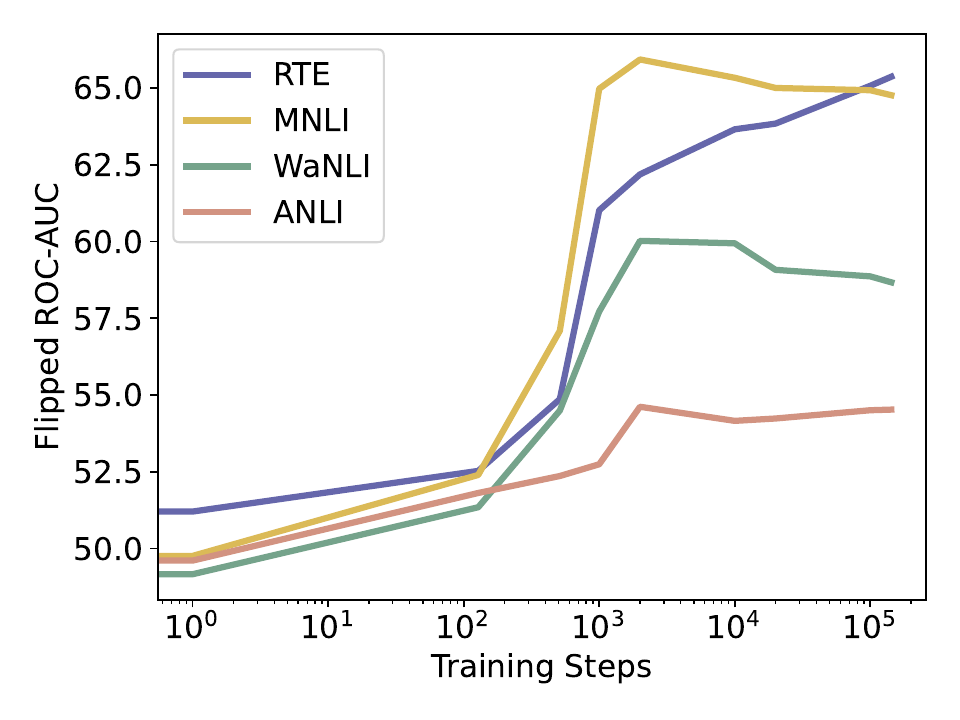}
    \caption{Flipped ROC-AUC of entailment score across Pythia-12b checkpoints. Each step is around 2M tokens.  All contradiction instances are excluded.}
    \label{fig:pythia-noc}
\end{figure}

\begin{figure}[t!]
    \centering
    \includegraphics[width=0.48\textwidth]{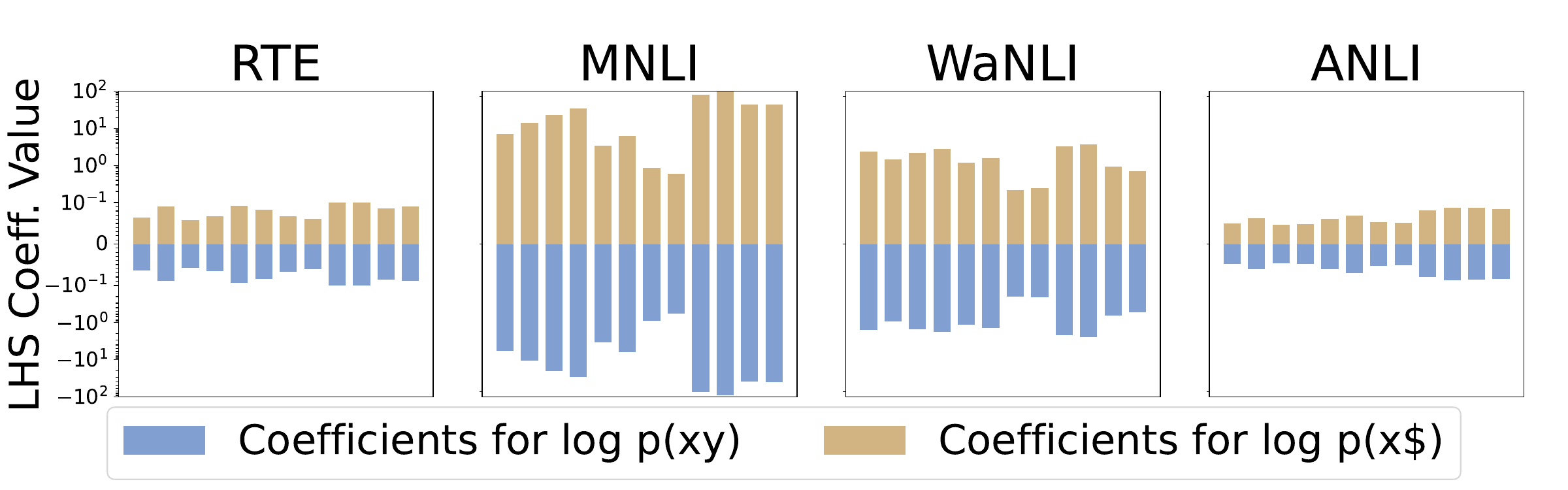}
    \includegraphics[width=0.48\textwidth]{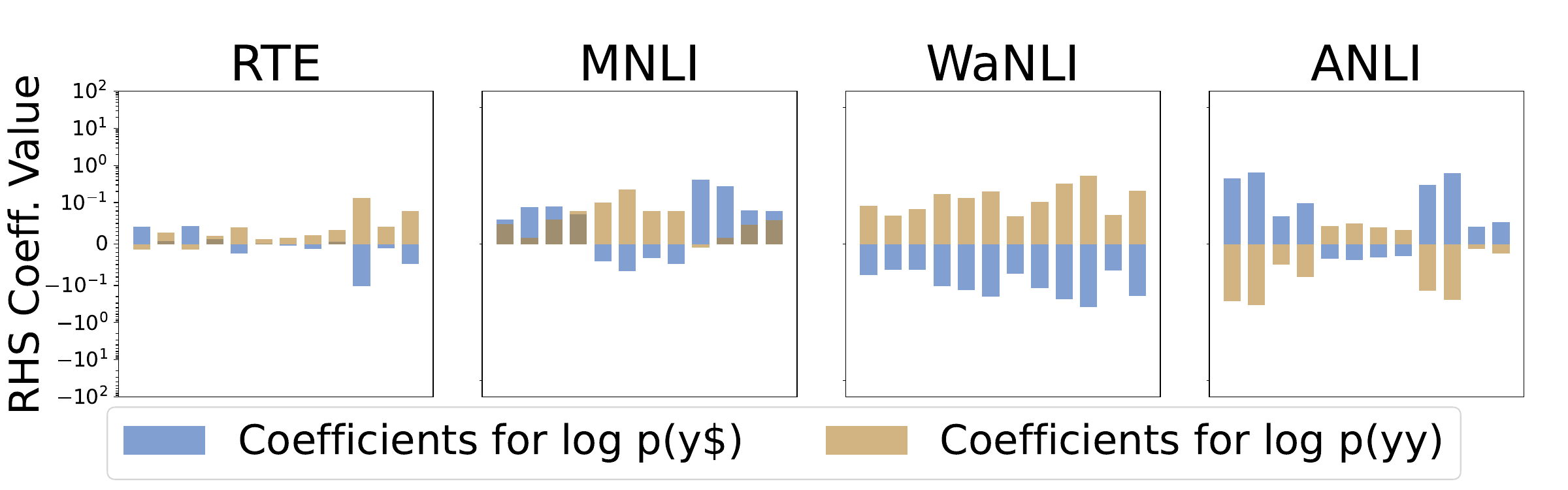}
    \caption{Learned logistic regression coefficients for the log-prob features for the broad-coverage datasets. Each bar represents one LM. For ease of visualization, $y$-axis is in log scale, except in $[-0.1, 0.1]$ where it is linear. All contradiction instances are excluded.
    }
    \label{fig:learned-coefficients-paired-noc}
\end{figure}

\begin{figure}[t!]
    \centering
    \includegraphics[width=0.48\textwidth]{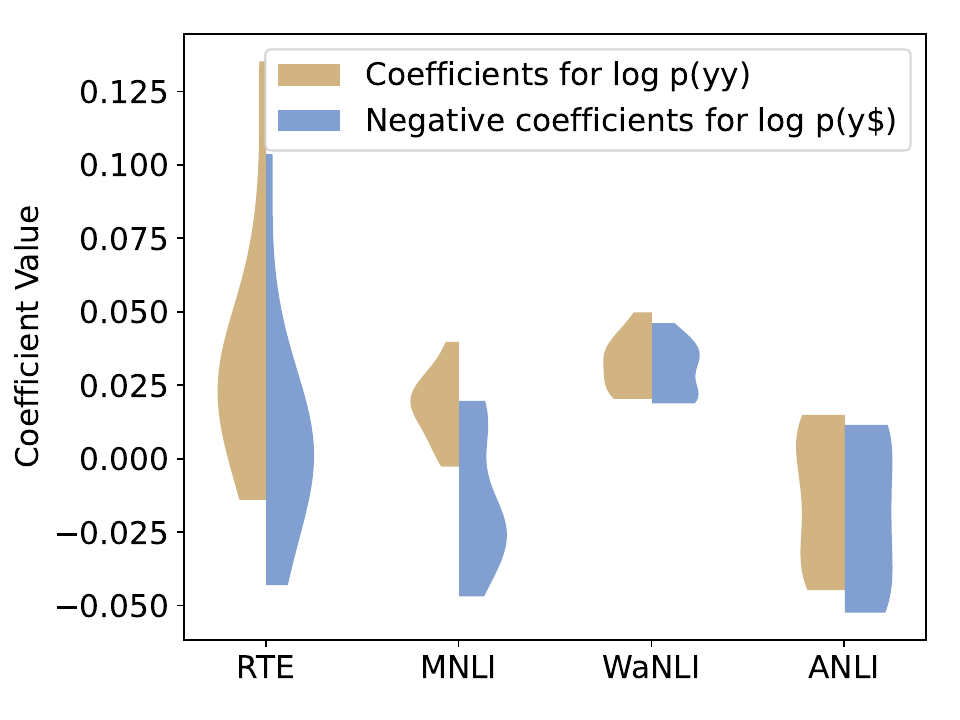}
    \caption{
    The RHS coefficients, for $\log p(y\$)$ and $\log p(yy)$, marginalized across all LMs. All contradiction instances are excluded.
    }
    \label{fig:learned-coefficients-yy-vs-y-noc}
\end{figure}

\begin{figure*}[t!]
    \centering
    \includegraphics[width=0.48\textwidth]{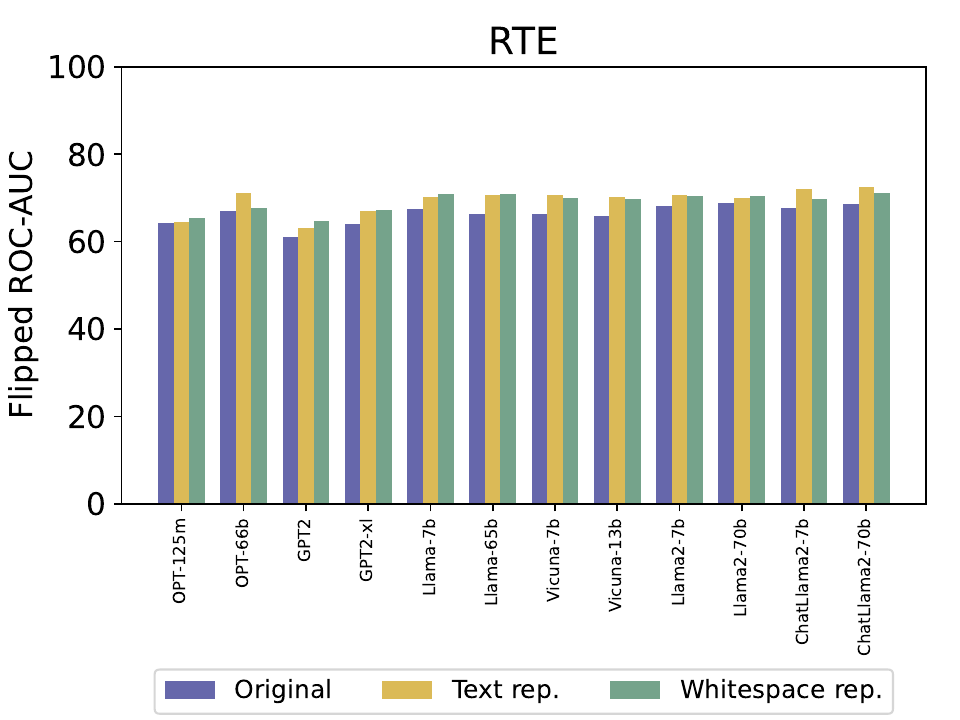}
    \includegraphics[width=0.48\textwidth]{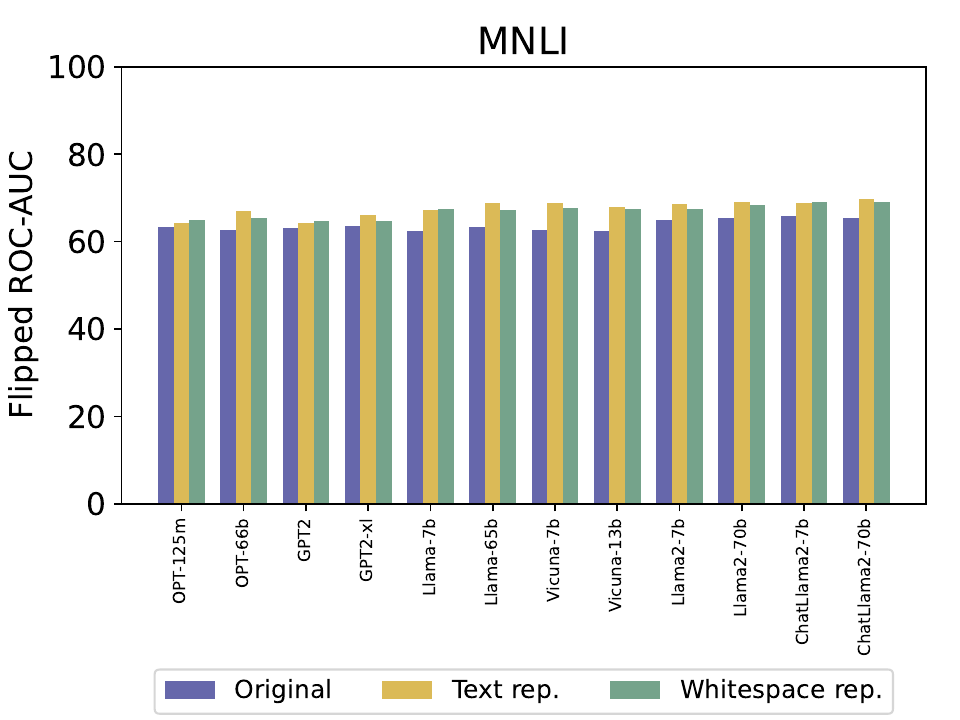}
    \caption{Performance of noise-tolerant (\S\ref{sec:noisy-speaker}) vs.~original test on RTE training set and MNLI matched validation set. All contradiction instances are excluded.}
    \label{fig:repetition-noc}
\end{figure*}

\begin{figure}[t!]
    \centering
    \includegraphics[width=0.48\textwidth]{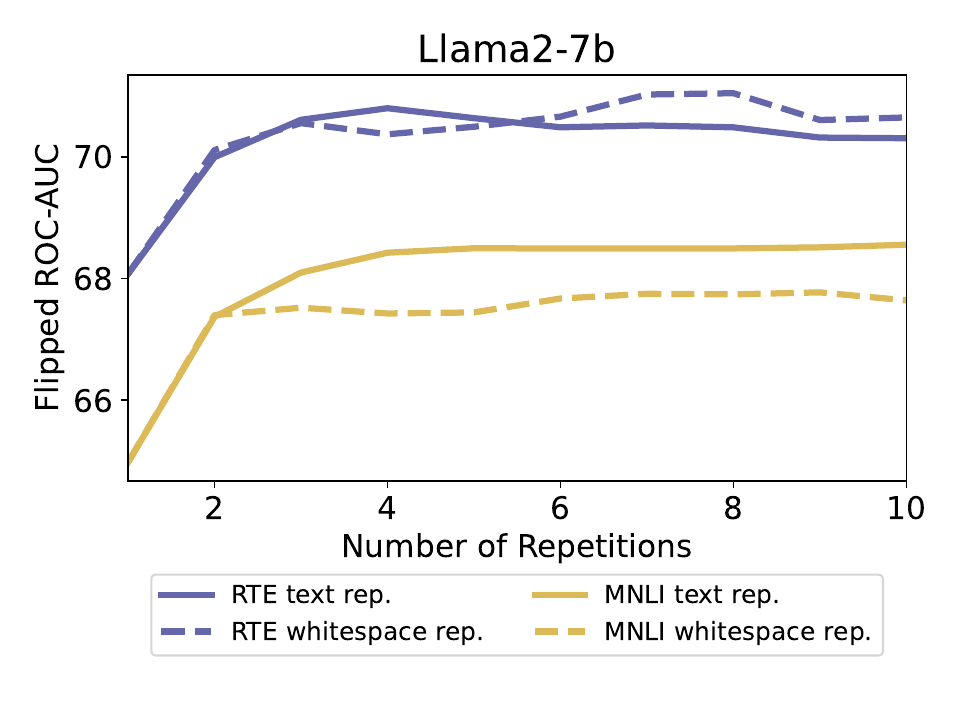}
    \caption{Performance of the noise-tolerance test with different numbers of repetitions (values of $n$ in \Cref{eq:repetition}). The original test is $n=1$. All contradiction instances are excluded.}
    \label{fig:num-repetitions-noc}
\end{figure}

\begin{figure*}[t!]
    \centering
    \includegraphics[width=\textwidth]{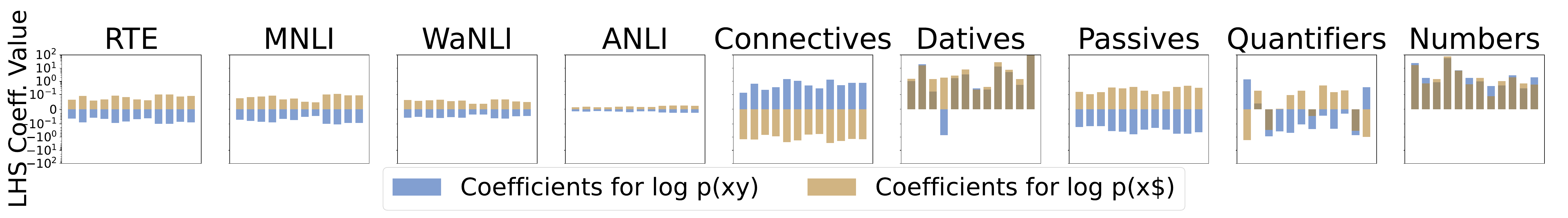}
    \includegraphics[width=\textwidth]{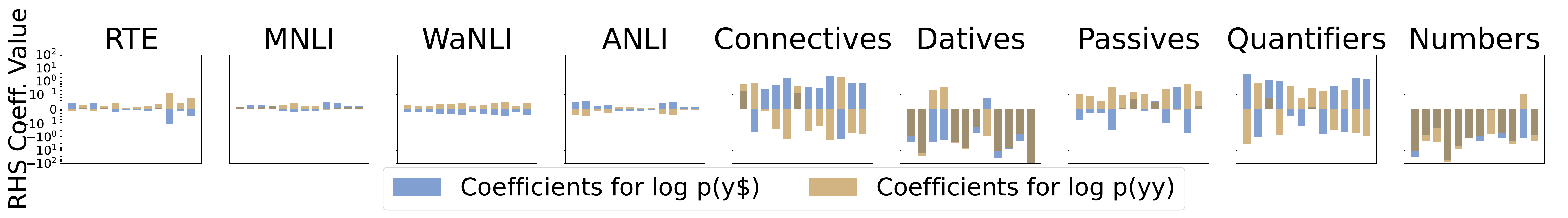}
    \caption{Learned logistic regression coefficients for the log-probability features for the broad-coverage datasets. Each bar represents one LM. For ease of visualization, $y$-axis is in log scale, except in $[-0.1, 0.1]$ where it is linear; and it is capped at $[-100, 100]$, requiring truncation in a few cases. All contradiction instances are excluded.
    }
    \label{fig:all-learned-coefficients-paired-noc}
\end{figure*}

\end{document}